\documentclass{article}

\usepackage{microtype}
\usepackage{graphicx}
\usepackage{subcaption}
\usepackage{booktabs} 

\usepackage{hyperref}

\newcommand{\algorithmicreturn}{\textbf{return}}
\makeatletter
\newcommand{\RETURN}{\ALC@it\algorithmicreturn\ }
\makeatother
\usepackage[preprint]{icml2026}

\usepackage{amsmath}
\usepackage{amssymb}
\usepackage{mathtools}
\usepackage{amsthm}

\usepackage[capitalize,noabbrev]{cleveref}

\theoremstyle{plain}
\newtheorem{theorem}{Theorem}[section]

\newtheorem{lemma}[theorem]{Lemma}
\newtheorem{corollary}[theorem]{Corollary}
\theoremstyle{definition}

\theoremstyle{remark}

\usepackage[textsize=tiny]{todonotes}

\usepackage{multirow}

\icmltitlerunning{P-EAGLE: Parallel-Drafting EAGLE with Scalable Training}

\begin{document}

\twocolumn[
  \icmltitle{P-EAGLE: Parallel-Drafting EAGLE with Scalable Training}



  \icmlsetsymbol{equal}{*}

  \begin{icmlauthorlist}
    \icmlauthor{Mude Hui}{equal,yyy,intern}
    \icmlauthor{Xin Huang}{equal,comp}
    \icmlauthor{Jaime Campos Salas}{comp}
    \icmlauthor{Yue Sun}{comp}
    \icmlauthor{Nathan Pemberton}{comp}
    \icmlauthor{Xiang Song}{comp}
    \icmlauthor{Ashish Khetan}{comp}
    \icmlauthor{George Karypis}{comp}
   
  \end{icmlauthorlist}

  \icmlaffiliation{yyy}{University of California, Santa Cruz}
  \icmlaffiliation{comp}{AWS Amazon}
  \icmlaffiliation{intern}{Work done during an internship at AWS.}

  \icmlcorrespondingauthor{Xin Huang}{xinxh@amazon.com}

  \icmlkeywords{Speculative Decoding, Large Language Models, Parallel Drafting, Inference Optimization}

  \vskip 0.3in
]



\printAffiliationsAndNotice{\icmlEqualContribution}

\begin{abstract}


Reasoning LLMs produce longer outputs, requiring speculative decoding drafters trained on extended sequences. Parallel drafting—predicting multiple tokens per forward pass—offers latency benefits over sequential generation, but training complexity scales quadratically with the product of sequence length and parallel positions, rendering long-context training impractical. We present P(arallel)-EAGLE, which transforms EAGLE from autoregressive to parallel multi-token prediction via a learnable shared hidden state. To scale training to long contexts, we develop a framework featuring attention mask pre-computation and sequence partitioning techniques, enabling gradient accumulation \textit{within} individual sequences for parallel-prediction training. We implement P-EAGLE in vLLM and demonstrate speedups of 1.10×–1.36× over autoregressive EAGLE-3 across GPT-OSS 120B, 20B, and Qwen3-Coder 30B.

\end{abstract}

\section{Introduction}

Autoregressive decoding in large language models (LLMs) presents a fundamental efficiency challenge: each token requires a complete forward pass through billions of parameters, rendering inference memory-bandwidth bound. Speculative decoding~\citep{chen2023accelerating} addresses this limitation by having a lightweight draft model propose multiple candidates autoregressively, which are then verified by the target model in a single forward pass. 

Among various approaches in speculative decoding, EAGLE~\citep{li2024eagle,li2025eagle3} has achieved widespread adoption in production inference systems, including vLLM~\citep{kwon2023efficient}, SGLang~\citep{zheng2024sglang}, and TensorRT-LLM~\citep{tensorrt_llm}, delivering 2--3$\times$ speedups over standard autoregressive decoding. EAGLE conditions token predictions on hidden states from the target model, leveraging contextual representations that standalone drafters must learn independently through multiple transformer layers. This enables a compact single-layer architecture comprising only 2--5\% of target model parameters. However, EAGLE generates draft tokens autoregressively: producing $K$ tokens requires $K$ sequential forward passes, creating significant drafting overhead.

Parallel drafting presents a promising approach to eliminate the overhead of autoregressive decoding.
Multiple prior works have explored parallel drafting strategies for speculative decoding \cite{gloeckle2024better,xiao2024parallelspec,cai2024medusa,monea2023pass,lin2025bita}.  ParallelSpec~\citep{xiao2024parallelspec} proposed parallel drafting with a single transformer layer, but omits critical implementation details---notably whether and how target model hidden states are utilized---and does not address the memory scaling challenges that arise from extended training sequences with multiple parallel prediction positions.  PARD~\citep{an2025pard} addresses this complexity through Conditional Drop-token (COD) training, which retains progressively fewer sequence positions at later parallel-prediction depths to reduce effective sequence length. However, both methods face scalability challenges when training on long sequences.

The scalability limitations of existing parallel drafting methods become consequential in modern inference workloads. Reasoning-capable models produce substantially longer outputs: for example, on UltraChat dataset~\citep{ding2023enhancing}, GPT-OSS 120B~\citep{openai2025gptoss120bgptoss20bmodel} exhibits median sequence lengths of 3,891 tokens, with P90 reaching 10,800 (Figure~\ref{fig:length_distribution}).
\begin{figure}[ht]
\centering
\includegraphics[width=1\columnwidth]{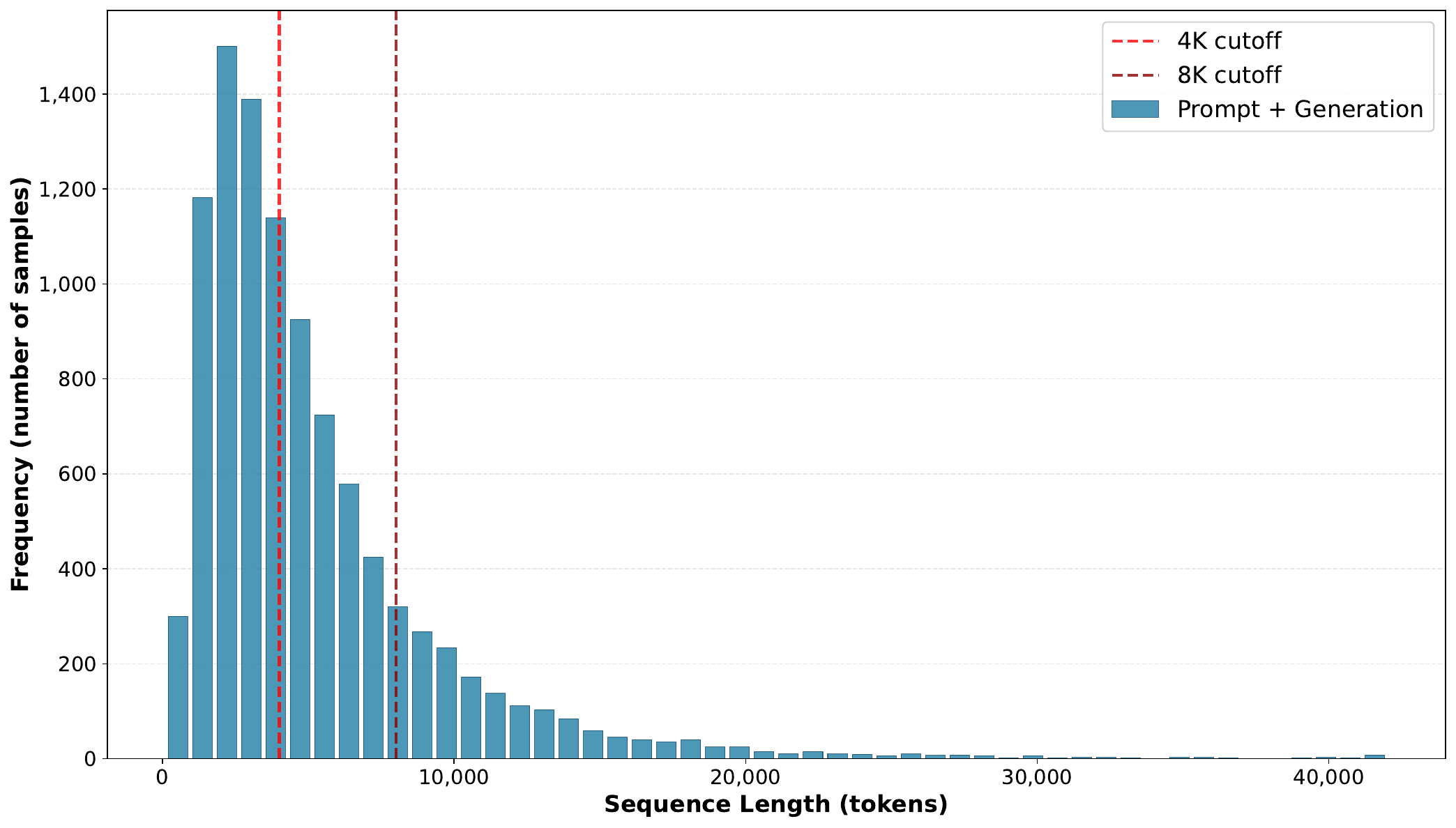}
\caption{Sequence length (prompt + generation) distribution on UltraChat dataset with GPT-OSS 120B. Reasoning level: Medium. Median: 3,891 tokens; P90: 10,800 tokens; P99: 20,000 tokens.}
\label{fig:length_distribution}
\end{figure}
Draft models trained on shorter sequences encounter a distribution mismatch when deployed on such workloads, exhibiting up to 25\% reduction in acceptance rate on the extended reasoning traces (Table~\ref{tab:method_comparison}). Effective parallel drafting therefore requires scalability to long training contexts that match inference distributions. Unlike standard training where gradient accumulation operates across examples, parallel multi-token prediction amplifies memory pressure—the effective sequence length grows linearly with the number of parallel prediction positions, posing optimization challenges absent from autoregressive training.


To quantify this scalability gap, we compare ParallelSpec~\citep{xiao2024parallelspec} and PARD~\citep{an2025pard} with our method under identical training conditions\footnote{ParallelSpec does not release code or sufficient training details; we implemented their method following the paper. PARD supports only standalone drafters; we adapted it to EAGLE's training framework.}. The results are shown in Table~\ref{tab:method_comparison}. ParallelSpec encounters extremely low acceptance length at 1K and 4K training contexts, and out-of-memory failures at 8K+ due to quadratic attention scaling. PARD's per-batch mask construction becomes computationally prohibitive beyond 4K contexts (see Section~\ref{sec:method_scalable}). Our method scales to 20K tokens while maintaining competitive acceptance length. For hyperparameters and hardware, see Appendix~\ref{appendix-sec:scal-compare}
\begin{table}[h]
\centering
\caption{Acceptance length (AL) comparison on the MT-Bench dataset. Target model: GPT-OSS 120B. Speculation length: $5$. ``Infeas.'' denotes computational infeasibility with 10+h per epoch.}
\label{tab:method_comparison}
\small
\scalebox{0.85}{
\begin{tabular}{lccccc}
\toprule
\multirow{2}{*}{Method} & \multirow{2}{*}{Layers} & \multicolumn{4}{c}{Training context length} \\
 & & 1K & 4K & 8K & 20K \\
\midrule
ParallelSpec + EAGLE 3 & 1  &  1.5 & 1.6 & OOM & OOM \\
PARD + EAGLE 3 & 4  & \textbf{2.4} & Infeas. & OOM & OOM \\
\textbf{Ours (P-EAGLE)} & 4 & \textbf{2.4 } & \textbf{2.8} & \textbf{2.9} & \textbf{3.0} \\
\bottomrule
\end{tabular}
}

\end{table}

We present \textbf{P(arallel-drafting) EAGLE}, which transforms EAGLE from autoregressive generation to parallel multi-token prediction with scalable training.
Our contributions are as follows.

\begin{enumerate}

\item \textbf{Scalable training framework for long contexts} (Section~\ref{sec:method_scalable}): We develop amortized mask construction and sequence partitioning to address a unique challenge in parallel-prediction training: attention memory scales quadratically with the product of sequence length and prediction depth. Our sequence partitioning technique splits a single sequence into segments for gradient accumulation while preserving attention dependencies.

\item \textbf{EAGLE-based parallel drafting architecture}
    (Section~\ref{sec:method_hidden}): We introduce a learnable shared
    hidden state that enables EAGLE to generate multiple draft tokens
    in a single forward pass. Theoretical analysis shows attention alone encodes sufficient positional information, eliminating the need for position-specific hidden states. Ablations demonstrate this simple design outperforms four position-aware alternatives by 7--15\%.

    \item \textbf{Optimzed training recipe} (Section~\ref{sec:method_recipe}): Through systematic ablations, we establish P-EAGLE training best practices including architecture depth, train-inference prediction depth alignment, and embedding strategies.

    \item \textbf{Production deployment} (Section~\ref{sec:experiments}): We implement P-EAGLE in vLLM. Through comprehensive benchmarking, P-EAGLE demonstrates consistent speedups of 1.10×–1.36× over autoregressive EAGLE-3 across GPT-OSS 120B, 20B~\citep{openai2025gptoss120bgptoss20bmodel} and Qwen3-Coder 30B~\citep{qwen3technicalreport}.

\end{enumerate}

\section{Architecture}
\label{sec:method_hidden}


\begin{figure}[t]
\centering
\includegraphics[width=0.98\columnwidth]{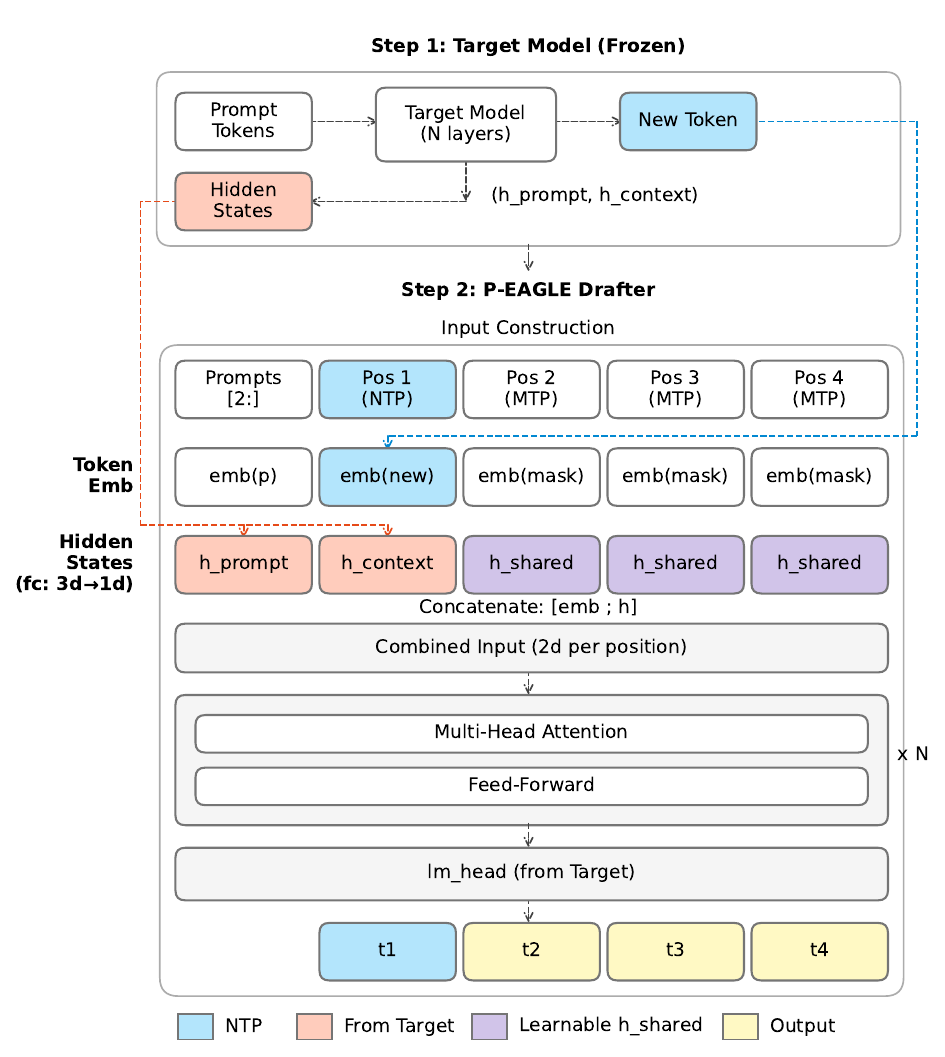}

\caption{P-EAGLE architecture. The target model (top) processes prompt tokens and produces hidden states from layer indexes 2, $L/2$, and $L-1$ (concatenated to $3d$ dimensions), where $L$ is the number of decoder layers. The P-EAGLE drafter (bottom) takes these hidden states for the Next-Token Prediction (NTP) position (Pos 1), which operates like standard autoregressive prediction with actual context. Multi-Token Prediction (MTP) positions (Pos 2-4) use a learnable shared hidden state $h_{\text{shared}}$ since they lack preceding hidden states. Token embeddings are combined with projected hidden states and processed through $N$ transformer layers.}

\label{fig:architecture}
\end{figure}

We present the P-EAGLE architecture in Figure~\ref{fig:architecture}. The drafter follows the LLaMA 3 architecture with rotary positional embeddings (RoPE)~\citep{su2024roformer}.

\textbf{Background.} We first overview autoregressive EAGLE using Figure~\ref{fig:architecture} as reference. To predict token $t_1$, the hidden state from the target model is concatenated with the token embedding, processed through transformer layers to produce a hidden vector, and passed through the LM head. This corresponds to the Next-Token Prediction (NTP) position (Pos 1) in the figure. To generate token $t_2$, the drafter takes the predicted token $t_1$ and the hidden vector used to predict $t_1$ (before passing through the LM head) as input for the next forward pass. Similarly, generating $t_3$ uses the predicted token $t_2$ and the hidden vector used to predict $t_2$. Producing $K$ draft tokens thus requires $K$ sequential forward passes.

\textbf{Challenge for parallel drafting.} Generating $K$ tokens in parallel eliminates sequential forward passes but introduces a new problem: positions predicting $t_2, t_3, \ldots$ (which we call MTP positions) lack the predicted tokens and hidden vectors from previous steps. P-EAGLE addresses this with two learnable parameters: a shared hidden state $h_{\text{shared}}$ that substitutes for the missing hidden vectors, and a mask token embedding that substitutes for the unknown previous tokens. This enables all $K$ tokens to be generated in a single forward pass. We compare four alternative hidden state designs in Section~\ref{sec:hidden_ablation}, finding this simple shared approach outperforms position-aware variants by 7--15\%.

\textbf{Additional design choices.} P-EAGLE unfreezes the token embeddings inherited from the target model, as the mask token embedding must be learned to encode meaningful input for MTP positions (Section~\ref{sec:recipe_embedding}). We also use a deeper architecture, with four layers achieving 46\% higher acceptance length than one layer (Section~\ref{sec:recipe_layers}).

\section{Scalable Training Framework for Long Contexts}
\label{sec:method_scalable}


Training a parallel token prediction model requires extending each sequence of length $n$ to accommodate $K$ parallel prediction depths, where depth $k$ predicts the token $k+1$ positions ahead. Without optimization, this creates $n \times K$ total positions with $O((nK)^2)$ attention complexity, causing out-of-memory failures at long sequences.

PARD~\citep{an2025pard} addresses this with Conditional Drop-token (COD) sampling, which reduces the number of positions at each prediction depth (also referred to as ``groups'' in PARD). Specifically, COD applies geometric decay: depth 0 retains all $n$ positions, depth 1 randomly retains $n \times r$ positions, depth 2 retains $n \times r^2$, and so on, where $r \in (0, 1)$ is the retention rate. The total positions across all depths becomes $n \times (1 + r + r^2 + \cdots + r^{K-1})$ rather than $n \times K$, significantly reducing attention memory. However, because COD samples different positions randomly for each training example, PARD must construct a custom attention mask per example. This mask enforces causal constraints across prediction depths: positions at depth $d$ can only attend to positions from earlier depths, not to depths $d+1$ or beyond (which do not exist at inference time). Constructing these masks requires $O((nK)^2)$ operations per example, becoming prohibitively expensive when training on long sequences (Table~\ref{tab:method_comparison}).

We address this bottleneck with two techniques: \textbf{amortized mask construction} (Section~\ref{sec:precomputed-mask}) and \textbf{sequence partitioning} (Section~\ref{sec:dependency-splitting}).

\subsection{Amortized Mask Construction}
\label{sec:precomputed-mask}

The key insight enabling our approach is that the causal structure across prediction depths is \textit{position-invariant}: the attention pattern for positions $0$ through $n$ is identical regardless of total sequence length. This means a mask for any sequence can be obtained by extracting the top-left $(n \times K) \times (n \times K)$ submatrix from a pre-computed maximum-length mask, as illustrated in Figure~\ref{fig:attention_mask_crop}.

\begin{figure}[t]
\centering
\includegraphics[width=0.48\textwidth]{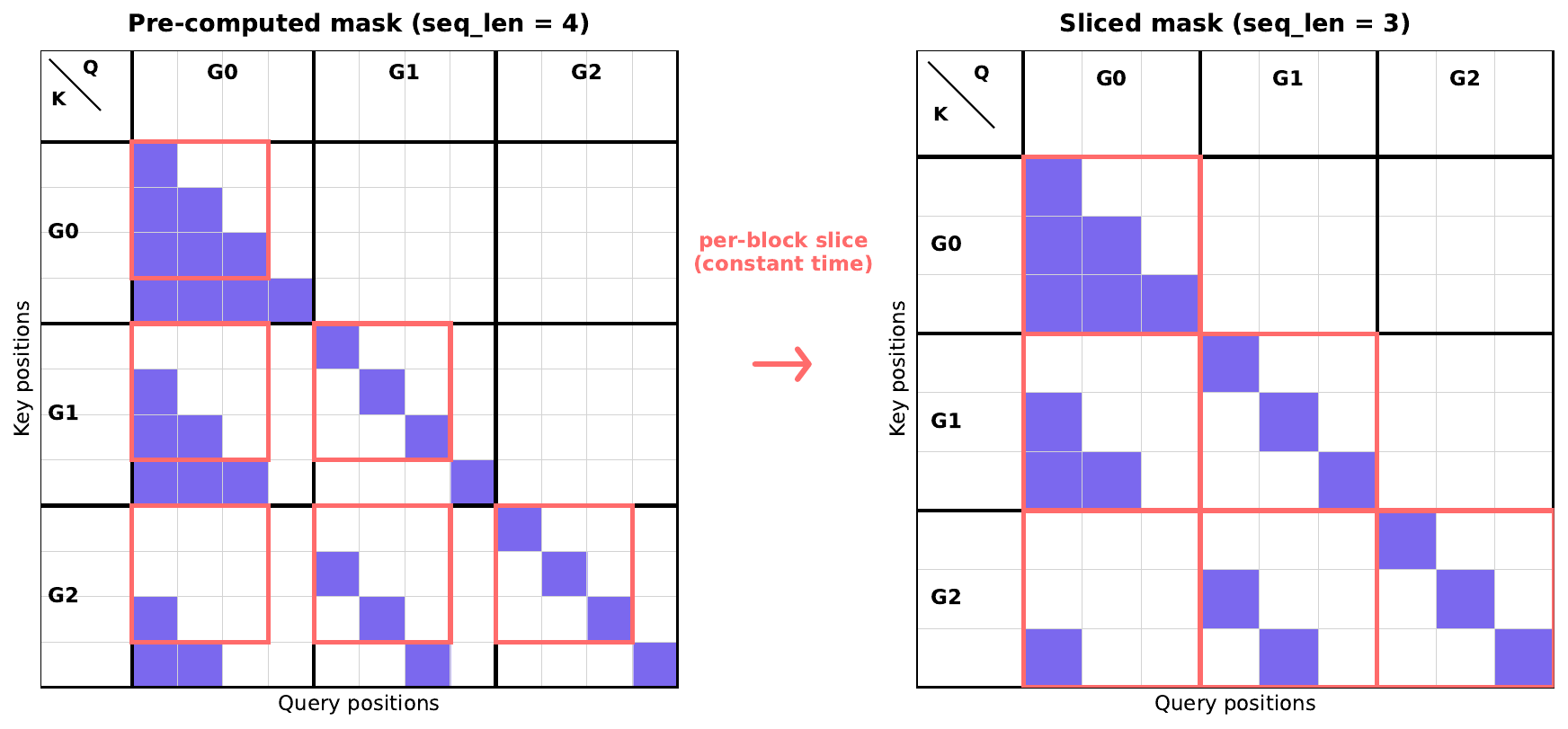}
\caption{Position-invariance of causal attention across prediction depths. G0, G1, G2 in the figure denote prediction depths 0, 1, 2, where depth $d$ predicts the token $d+1$ positions ahead. The mask for a shorter sequence (right) is exactly the top-left submatrix of a longer sequence's mask (left), enabling constant-time retrieval.}
\label{fig:attention_mask_crop}
\end{figure}

We exploit this property by constructing the attention mask once at training initialization for the maximum sequence length. During training, per-example masks are obtained via tensor slicing---a constant-time view operation requiring no additional memory allocation. The one-time initialization cost is amortized across millions of training steps, with a fixed memory footprint independent of dataset size.

The practical impact is substantial. Table~\ref{tab:training_overhead} shows that at 2048-token sequences, PARD's per-example mask construction causes 48$\times$ data loading slowdown and 5$\times$ epoch time increase. Our pre-computed approach eliminates this bottleneck entirely.

\begin{table}[h]
\centering
\caption{Training overhead (2048 tokens, $K=8$). Data loading measured on 128 examples. Epoch measured on UltraChat (200K examples), 8$\times$H200 GPUs.}
\label{tab:training_overhead}
\small
\begin{tabular}{lcc}
\toprule
Method & Load (128 ex.) & Epoch \\
\midrule
EAGLE-3\footnotemark & 14.8s & 2.5h \\
PARD & 718.5s (48$\times$) & 12+h (5$\times$) \\
\textbf{Ours} & 17.5s & 1.8h \\
\bottomrule
\end{tabular}
\end{table}
\footnotetext{EAGLE-3 uses Training-Time Test~\citep{li2025eagle3}, requiring multiple forward passes per example.}

\subsection{Sequence Partitioning}
\label{sec:dependency-splitting}
Pre-computed masks eliminate construction overhead, but memory remains a bottleneck as sequences grow. Consider an 8192-token sequence with $K=8$ prediction depths and retention rate $r=0.8$. The total positions across all depths follows $n \times (1 - r^K)/(1 - r)$, yielding approximately 34K positions. Attention memory scales as $O(L^2)$ with total positions $L$, while embeddings and output logits scale as $O(L)$. Training at longer sequences requires managing this memory growth, which introduces two challenges.
 
\textbf{Challenge 1: Within-sequence gradient accumulation.} Standard gradient accumulation addresses memory constraints by splitting a batch into micro-batches, where each micro-batch contains one or more complete training examples. This assumes individual examples fit in memory. When a single sequence exceeds memory, a new approach is needed. We propose partitioning the sequence itself into segments, processing each with a separate forward-backward pass, and accumulating gradients across segments. To our knowledge, this within-sequence gradient accumulation is unique to parallel-prediction training and has not been explored in prior work.

\textbf{Challenge 2: Preserving cross-depth dependencies.} Sequence partitioning is further complicated by COD's attention structure. The causal constraint requires that position $p$ at depth $d$ attends to position $p-1$ at depth $d-1$. Meanwhile, COD's random sampling creates different position sets at each depth, as illustrated in Figure~\ref{fig:splitting}. When partitioning by position index, a position and its dependency may land in different segments, violating the attention pattern.

Figure~\ref{fig:splitting} provides a concrete example. Consider $n=16$ tokens with $K=4$ prediction depths (G0--G3 in the figure) and retention rate $r=0.7$. Depth 0 contains all 16 positions, while depths 1--3 contain progressively fewer due to COD sampling---suppose depth 1 retains positions $\{1,3,4,6,7,9,10,12,14,15\}$, depth 2 retains $\{2,5,7,8,11,13,15\}$, and depth 3 retains $\{3,6,9,12,14\}$, yielding 38 total positions. To partition into 2 segments: if we assign by depth-0 indices (positions 0--7 to Segment 0, positions 8--15 to Segment 1), position 8 at depth 2 lands in Segment 1, but its dependency---position 7 at depth 1---lands in Segment 0. This breaks the required attention pattern.

  \begin{figure}[t]
  \centering
  \includegraphics[width=\columnwidth]{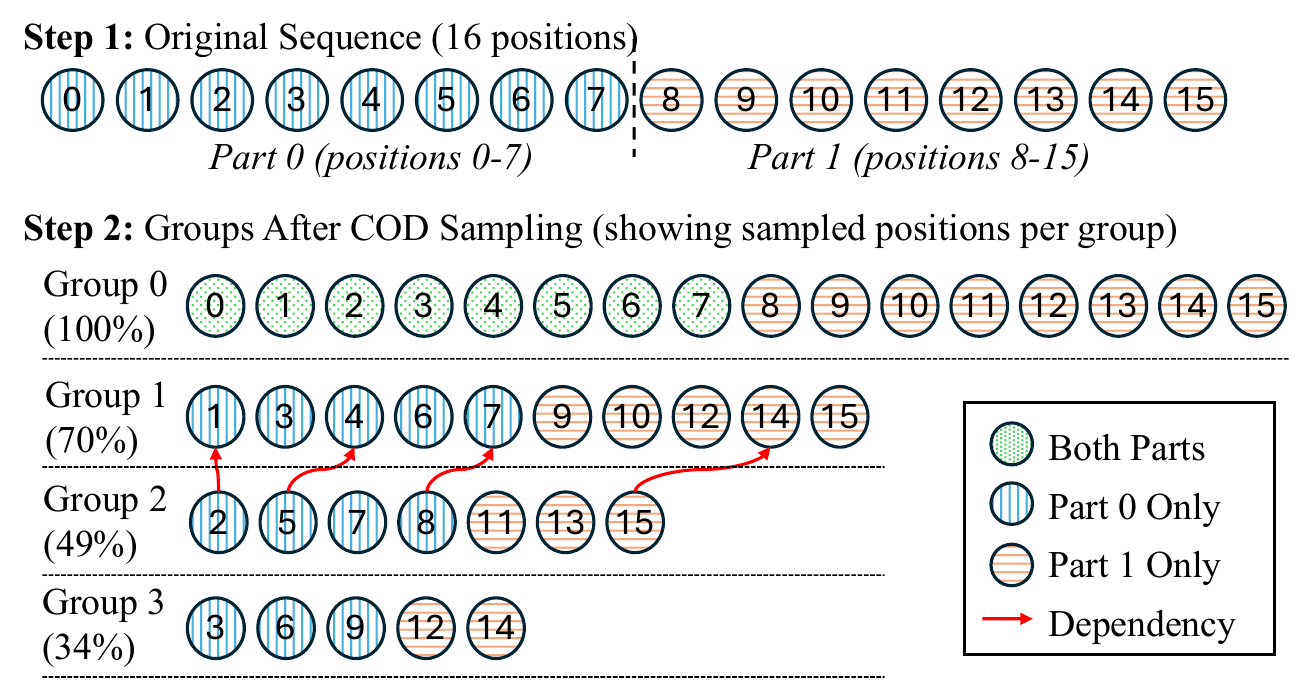}
  \caption{Sequence partitioning for within-sequence gradient accumulation. Example: $n=16$ tokens, $K=4$ prediction depths (G0--G3 denote groups in PARD terminology, where group $g$ predicts the token $g+1$ positions ahead). Depth 0 contains all positions; depths 1--3 contain progressively fewer due to COD sampling. Partitioning by depth-0 indices causes dependency violations: position 8 at depth 2 depends on position 7 at depth 1, but they land in different segments. Our algorithm tracks assignments iteratively across depths to preserve dependencies.}
  \label{fig:splitting}
  \end{figure}

  \textbf{Our solution: sequence partitioning technique.}  We track segment assignments iteratively across prediction depths. For depths 0 and 1, positions are assigned to segments based on their index. For depths $d \geq 2$, each position is assigned to the same segment as its dependency at depth $d-1$. This iterative propagation guarantees that position $p$ and its dependency $p-1$ always reside in the same segment. Additionally, each segment includes depth-0 positions cumulatively up to its boundary to satisfy causal attention. Algorithm~\ref{alg:splitting} presents the pseudocode for the sequence partitioning technique.

\textbf{Result.} With $S$ segments, peak attention memory reduces from $O(L^2)$ to $O(L^2/S^2)$, enabling within-sequence gradient accumulation while preserving all cross-depth attention dependencies.

\begin{algorithm}[t]
\caption{Sequence Partitioning }
\label{alg:splitting}
\begin{algorithmic}[1]
\REQUIRE $\mathcal{P} = \{\mathcal{P}_0, \mathcal{P}_1, \ldots, \mathcal{P}_{K-1}\}$: sampled positions for $K$ depths
\REQUIRE $S$: number of segments for gradient accumulation
\REQUIRE $L$: sequence length
\ENSURE $\mathcal{A}$: segment assignment for all positions
\ENSURE $\mathcal{N}$: cumulative depth-0 positions per segment

\STATE \textcolor{gray}{\textit{// Initialize segment boundaries}}
\STATE $\mathcal{B} \leftarrow \{0, \frac{L}{S}, \frac{2L}{S}, \ldots, L\}$

\STATE \textcolor{gray}{\textit{// Phase 1: Assign segments for depths 0 and 1 by position}}
\FOR{$g \in \{0, 1\}$}
    \FOR{$p \in \mathcal{P}_g$}
        \STATE $\mathcal{A}_g[p] \leftarrow \max\{s : \mathcal{B}_s \leq p\}$
    \ENDFOR
\ENDFOR

\STATE \textcolor{gray}{\textit{// Phase 2: Propagate assignments via dependencies}}
\FOR{$g = 2$ \textbf{to} $K-1$}
    \FOR{$p \in \mathcal{P}_g$}
        \STATE $\mathcal{A}_g[p] \leftarrow \mathcal{A}_{g-1}[p-1]$ \hfill $\triangleright$ \textit{inherit from dependent position}
    \ENDFOR
\ENDFOR

\STATE \textcolor{gray}{\textit{// Phase 3: Accumulate NTP positions for causal attention}}
\FOR{$s = 0$ \textbf{to} $S-1$}
    \STATE $\mathcal{N}_s \leftarrow \{p \in \mathcal{P}_0 : p < \mathcal{B}_{s+1}\}$
\ENDFOR

\RETURN $\mathcal{A}, \mathcal{N}$
\end{algorithmic}
\end{algorithm}



\section{Training Recipe}
\label{sec:method_recipe}



This section presents ablation studies validating P-EAGLE's design choices. We evaluate P-EAGLE on two out-of-distribution benchmarks: HumanEval~\citep{chen2021codex} and MT-Bench~\citep{zheng2023judging}, reporting acceptance length at speculation depth $K$=5. Unless otherwise noted, ablations use LLaMA 3.1 8B~\citep{grattafiori2024llama} as the target model and a single decoder layer for P-EAGLE.


  \subsection{Hidden State Design}
  \label{sec:hidden_ablation}

A natural question is whether MTP positions should have position-specific hidden states rather than sharing one. We conduct this study using GPT-OSS 20B as the target model with a 4-layer P-EAGLE drafter. We evaluated four augmentation strategies---adding depth-specific encodings (to distinguish MTP position 2, 3, 4...), injecting projected NTP hidden states, or combining both. All underperformed the simple shared hidden state by 7--15\%. Results are shown in Table~\ref{tab:hidden_strategies}.

  The ``+ NTP hidden'' variants inject the preceding NTP position's hidden state into MTP positions. The regularized variant adds learnable scaling: $h_{\text{MTP}} = h_{\text{shared}} + \alpha \cdot \text{proj}(h_{\text{NTP}})$, where $\alpha$ controls context injection strength. Formulations for all variants are in Appendix~\ref{appendix:hidden_ablation_details}.

  \begin{table}[h]
  \centering
  \caption{Hidden state ablation on HumanEval. Target: GPT-OSS 20B. Training: 4-layer P-EAGLE, 20 epochs on OpenCodeInstruct. Evaluation: speculation length 5.}
  \label{tab:hidden_strategies}
  \scalebox{0.9}{
  \begin{tabular}{lcc}
  \toprule
  Strategy &  Acc.\ Length & $\Delta$\% \\
  \midrule
  \textbf{Baseline (learnable shared)} &  \textbf{3.16} & --- \\
  + depth-specific encoding & 2.85 & $-$9.8\% \\
  + NTP hidden + depth encoding  & 2.68 & $-$15.2\% \\
  + NTP hidden only & 2.81 & $-$11.1\% \\
  + regularized NTP hidden  & 2.94 & $-$7.0\% \\
  \bottomrule
  \end{tabular}
  }
  \end{table}

\textbf{Theoretical justification.} We attribute this to functional redundancy. Rotary position embeddings already encode absolute position, from which parallel-prediction depth is computable, eliminating the need for explicit depth encodings. Similarly, the attention mechanism allows MTP positions to access NTP context directly, making auxiliary context injection superfluous. We formalize this in Appendix~\ref{appendix:theory}: absolute position is uniquely recoverable from RoPE-based attention scores (Theorem~\ref{thm:score_injectivity}).

\textbf{Empirical confirmation.} The regularized variant provides direct evidence that the model actively learns to minimize context injection. The learnable $\alpha$ decays exponentially from 0.1 to 0.029 over training---a 71\% decrease---converging toward zero. Moreover, the baseline (no context injection) outperforms the regularized variant throughout training: at epoch 20, baseline achieves 57.9\% MTP accuracy compared to 54.6\% for the regularized variant. This confirms that context injection hurts performance, and the model mitigates this by driving $\alpha$ toward zero. See Figure~\ref{fig:alpha_trajectory} in Appendix~\ref{appendix:hidden_ablation_details} for detailed trajectories.

\subsection{Increasing Model Capacity}
\label{sec:recipe_layers}

The most impactful factor is model depth. Autoregressive EAGLE achieves strong acceptance with a single layer because each position conditions on the previously-generated token and hidden states, while P-EAGLE generates all tokens in parallel without access to intermediate tokens.

\begin{table}[h]
\centering
\caption{Effect of decoder layer count on P-EAGLE acceptance length. $\Delta$\% reports the relative change w.r.t.\ the 1-layer baseline on each benchmark.}
\label{tab:layer_ablation}
\small
\begin{tabular}{lccc}
\toprule
Layers & HumanEval & MT-Bench & $\Delta$\% \\
\midrule
1 & 2.69 & 2.41 & — \\
2 & 3.58 & 2.76 & +33.1\% / +14.5\% \\
4 & 3.92 & 3.04 & +45.7\% / +26.1\% \\
\bottomrule
\end{tabular}
\end{table}

Table~\ref{tab:layer_ablation} shows that increasing from 1 to 2 layers provides the largest gain (+33\% on HumanEval), with 4 layers achieving an additional +9.5\%. Although additional layers increase per-forward-pass latency, Section~\ref{sec:experiments} demonstrates that the latency savings from parallel drafting (i.e., reduction from $K$ sequential forward passes to one forward pass) far outweigh this overhead.

\subsection{Unfreezing the Embedding Layer}
\label{sec:recipe_embedding}

Standard autoregressive EAGLE freezes the target model's embedding layer during training---the embeddings are already well-suited for representing actual tokens. P-EAGLE introduces a \textit{mask token} (i.e., a pre-defined unused token ID) for MTP positions. A frozen embedding layer cannot adapt to encode meaningful information for this mask token.

\begin{table}[h]
\centering
\caption{Effect of unfreezing the embedding layer on P-EAGLE acceptance length. $\Delta$\% reports the relative change w.r.t.\ the frozen-embedding baseline on each benchmark.}

\label{tab:embedding_ablation}
\small
\begin{tabular}{lccc}
\toprule
Freeze Emb. & HumanEval & MT-Bench & $\Delta$\% \\
\midrule
Yes (frozen) & 2.56 & 2.29 & — \\
No (trainable) & 2.69 & 2.41 & +5.1\% / +5.2\% \\
\bottomrule
\end{tabular}
\end{table}

Table~\ref{tab:embedding_ablation} confirms that unfreezing embeddings provides consistent +5\% improvement across both evaluation datasets. We hypothesize that the learned mask token embedding encodes a ``default next-token prior'' that serves as a meaningful starting point for parallel positions, complementing the learnable shared hidden state.

\subsection{Training vs. Inference Speculation Depth}
\label{sec:recipe_tasks}

P-EAGLE trains with $K_{\text{train}}$ parallel prediction groups and speculates with depth $K_{\text{infer}}$ at inference. A natural question is whether these should match.

\begin{table}[h]
\centering
\caption{Effect of training speculation depth on P-EAGLE acceptance length. $\Delta$\% reports the relative change w.r.t.\ the $K_{\text{tr}}{=}5,\,K_{\text{inf}}{=}5$ baseline on each benchmark.}

\label{tab:tasks_ablation}
\small
\begin{tabular}{ccccc}
\toprule
$K_{\text{tr}}$ & $K_{\text{inf}}$ & HumanEval & MT-Bench & $\Delta$\% \\
\midrule
5 & 5 & 2.41 & 2.20 & — \\
8 & 5 & 2.51 & 2.26 & +4.1\% / +2.7\% \\
\bottomrule
\end{tabular}
\end{table}

Table~\ref{tab:tasks_ablation} shows that training with $K_{\text{train}}=8$ while inferring at $K_{\text{infer}}=5$ yields +4\% improvement over matched settings. Training with longer prediction horizons encourages the model to learn more robust multi-step dependencies, with positions 6--8 providing additional supervision that benefits positions 1--5. Interestingly, PARD~\citep{an2025pard} demonstrates that standalone draft models can extrapolate in the opposite direction ($K_{\text{infer}} > K_{\text{train}}$).

\subsection{Extended Training Duration}
\label{sec:recipe_duration}

P-EAGLE faces a harder learning problem than autoregressive EAGLE: it must learn to extract position-specific information via attention rather than receiving it directly as input. We find that extended training duration improves acceptance, as shown in Table~\ref{tab:duration_ablation}. 

\begin{table}[h]
\centering
\caption{Effect of training duration on P-EAGLE acceptance length. $\Delta$\% reports the relative change w.r.t.\ the 20-epoch baseline on each benchmark.}

\label{tab:duration_ablation}
\small
\begin{tabular}{lccc}
\toprule
Epochs & HumanEval & MT-Bench & $\Delta$\% \\
\midrule
20 & 3.92 & 3.04 & — \\
40 & 3.98 & 3.15 & +1.5\% / +3.6\% \\
60 & 4.00 & 3.18 & +2.0\% / +4.6\% \\
\bottomrule
\end{tabular}
\end{table}

\subsection{Longer Training Sequences}
\label{sec:recipe_seqlen}

Finally, we find that increasing training sequence length improves acceptance. Table~\ref{tab:seqlen_ablation} shows that increasing sequence length from 512 to 2048 yields +2\% improvement. Since LLaMA 3.1 8B is not a reasoning model, the gains are modest, especially on these short-context evaluation datasets. For reasoning-capable models with much longer outputs, as demonstrated in Table~\ref{tab:method_comparison}, the benefits of long-context training are more pronounced.

\begin{table}[h]
\centering
\caption{Effect of maximum training sequence length on P-EAGLE acceptance length. $\Delta$\% reports the relative change w.r.t.\ the 512-token baseline on each benchmark.}
\label{tab:seqlen_ablation}
\small
\begin{tabular}{lccc}
\toprule
Max Seq Len & HumanEval & MT-Bench & $\Delta$\% \\
\midrule
512 & 2.51 & 2.26 & — \\
2048 & 2.56 & 2.29 & +2.0\% / +1.3\% \\
\bottomrule
\end{tabular}
\end{table}




The key insight is that P-EAGLE requires ``capacity compensation'' for the information loss inherent in parallel prediction: more layers to transform shared input into position-specific representations, trainable embeddings to encode the mask token meaningfully, and extended training to master the harder attention-based context extraction. With these adaptations, P-EAGLE achieves acceptance rates competitive with autoregressive EAGLE while reducing drafting latency through parallelism. For comprehensive comparison, see Section~\ref{sec:experiments}.

\section{Experiments}
\label{sec:experiments}

We evaluate P-EAGLE across three popular open-source models, comparing against autoregressive (AR) EAGLE-3 on acceptance length and end-to-end inference throughput from high-performance inference framework vLLM.

\subsection{Experimental Setup}\label{sec:exp_setup}

We evaluate P-EAGLE on GPT-OSS 120B, GPT-OSS 20B, and Qwen3-Coder 30B. Based on the capacity analysis in Section~\ref{sec:recipe_layers}, we train P-EAGLE with 4 decoder layers. For comparison with 2-layer P-EAGLE, see Appendix~\ref{appendix:2layer}.

\textbf{Training configuration.} All models are trained with maximum sequence length 8192 tokens, parallel prediction groups $K_{\text{train}} = 8$, and COD down-sampling ratio 0.8. We use batch size 8 with micro-batch size 1 (8-step gradient accumulation), linear learning rate schedule with peak $1 \times 10^{-4}$ and warmup ratio 0.0025. Training is conducted on 8$\times$ H200 GPUs. We use identical training configurations across all three target models to ensure fair comparison.

\textbf{Training data.} We train both P-EAGLE and AR EAGLE 3 on three datasets: UltraChat~\citep{ding2023enhancing}, GSM-8K (train split)~\citep{cobbe2021gsm8k}, OpenCodeInstruct~\citep{ahmad2025opencodeinstruct}.

\textbf{Evaluation.} We evaluate on three out-of-distribution benchmarks: HumanEval (code generation)~\citep{chen2021codex}, MT-Bench (multi-turn conversation)~\citep{zheng2023judging}, and GSM-8K test split (mathematical reasoning)~\citep{cobbe2021gsm8k}.

\textbf{Baseline.} We compare P-EAGLE against AR EAGLE-3 as our primary baseline.  The baseline follows the canonical single-layer design from~\citet{li2025eagle3}\footnote{Under sequential generation, drafting latency scales with layer count, offsetting the marginal acceptance-length gains. Single-layer is thus optimal for AR EAGLE throughput.}. We also implemented the Harmonized Context Alignment (HCA) loss~\citep{zhang2024learning} (AR-specific) in training the baseline drafter, yielding strong absolute performance (e.g., acceptance length of 4.36 out of speculation length 5 on HumanEval for Qwen3-Coder 30B, approaching the theoretical maximum of 6.0). This represents a more challenging baseline than alternatives such as Medusa~\citep{cai2024medusa} or Lookahead decoding~\citep{fu2024break}.

ParallelSpec~\citep{xiao2024parallelspec} and PARD~\citep{an2025pard} are excluded from comparison due to scalability limitations demonstrated in Table~\ref{tab:method_comparison}. Both ParallelSpec and PARD encounter out-of-memory failures at 8K context length during training. Since production training for reasoning-capable models requires 8K+ context support, these methods cannot be evaluated under equivalent conditions.

\subsection{Acceptance Length Comparison}
\label{sec:exp_acceptance}

Before examining end-to-end performance, we establish that P-EAGLE achieves \textit{comparable acceptance length} to autoregressive EAGLE-3. This is a necessary condition: if parallel drafting substantially degraded draft quality, latency savings would be offset by reduced acceptance rates.

\textbf{Key insight.} The goal of this comparison is \textit{not} to demonstrate that P-EAGLE achieves higher acceptance length---rather, we show that parallel drafting can match autoregressive quality with modest additional capacity (2--4 layers vs.\ 1 layer). The true benefit of P-EAGLE lies in end-to-end throughput improvement (Section~\ref{sec:exp_e2e}), not acceptance length superiority.

\begin{table}[t]
\centering
\caption{Acceptance length comparison across three models on three out-of-distribution benchmarks. Speculation depth $K=5$, max new tokens 2048. Percentages denote improvement over AR EAGLE-3.}
\label{tab:main_results}
\resizebox{\columnwidth}{!}{
\begin{tabular}{llcc}
\toprule
\textbf{Model} & \textbf{Dataset} & \textbf{AR EAGLE-3} & \textbf{P-EAGLE (4L)} \\
\midrule
\multirow{4}{*}{GPT-OSS 120B}
& HumanEval & 3.5 & \textbf{3.5} (0.0\%) \\
& MT-Bench  & 2.7 & \textbf{2.9} (+10.2\%) \\
& GSM-8K    & 3.3 & \textbf{3.5} (+5.2\%) \\
& \textit{Average} & \textit{3.1} & \textit{\textbf{3.3} (+4.5\%)} \\
\midrule
\multirow{4}{*}{GPT-OSS 20B}
& HumanEval & 3.7 & \textbf{3.8} (+2.4\%) \\
& MT-Bench  & 3.4 & \textbf{3.4} (+1.5\%) \\
& GSM-8K    & 3.9 & \textbf{4.0} (+3.1\%) \\
& \textit{Average} & \textit{3.7} & \textit{\textbf{3.7} (+2.5\%)} \\
\midrule
\multirow{4}{*}{Qwen3-Coder 30B}
& HumanEval & 4.4 & \textbf{4.5} (+3.7\%) \\
& MT-Bench  & 3.0 & \textbf{3.0} (+0.3\%) \\
& GSM-8K    & 3.1 & \textbf{3.2} (+1.0\%) \\
& \textit{Average} & \textit{3.5} & \textit{\textbf{3.6} (+2.0\%)} \\
\bottomrule
\end{tabular}
}
\vspace{-2.em}
\end{table}


Table~\ref{tab:main_results} presents acceptance length results across all configurations. Key observations: (1) P-EAGLE matches or exceeds AR EAGLE-3 across all 9 model-dataset combinations. The average improvement is $+4.5\%$ for GPT-OSS 120B, $+2.5\%$ for GPT-OSS 20B, and $+2.0\%$ for Qwen3-Coder 30B. (2) Qwen3-Coder 30B presents the most challenging baseline: AR EAGLE-3 achieves acceptance length of 4.36 on HumanEval, approaching the theoretical maximum of 6.0 for $K=5$. Nevertheless, P-EAGLE consistently matches or exceeds the baseline across all three benchmarks (+3.7\% on HumanEval, +0.3\% on MT-Bench, +1.0\% on GSM-8K), confirming that parallel drafting maintains quality even when the autoregressive baseline is already highly optimized. We provide a comparison between 2-layer and 4-layer P-EAGLE in Appendix~\ref{appendix:2layer}.

  \begin{table*}
  \centering
 \caption{Output Tokens Per Second (OTPS) across speculation depths $K$ and concurrency levels (C). At each $K$, OTPS measures total throughput across all concurrent requests. \underline{Underline} indicates AR baseline (optimal $K$). P-EAGLE speedups (in parentheses) are relative to this baseline; \textbf{bold} indicates best speedup. HE=HumanEval, MT=MT-Bench, GSM=GSM-8K. All experiments use chain drafting and are measured on 1 H200 GPU.}
  \label{tab:otps}
  \scalebox{0.9}{
  \begin{tabular}{llc|ccc|ccc}
  \toprule
  & & & \multicolumn{3}{c|}{\textbf{C=2}} & \multicolumn{3}{c}{\textbf{C=4}} \\
  \textbf{Model} & \textbf{Method} & $K$ & HE & MT & GSM & HE & MT & GSM \\
  \midrule
  \multirow{6}{*}{120B}
  & \multirow{3}{*}{AR} & 3 & \underline{592} & \underline{516} & \underline{612} & \underline{864} & \underline{774} & 930 \\
  &  & 5 & 566 & 457 & 606 & 841 & 698 & \underline{936} \\
  &  & 7 & 503 & 402 & 555 & 774 & 620 & 869 \\
  \cmidrule{2-9}
  & \multirow{3}{*}{P-EAGLE} & 3 & 616 (1.04$\times$) & 537 (\textbf{1.04}$\times$) & 636 (1.04$\times$) & 883 (1.02$\times$) & 790 (1.02$\times$) & 939 (1.00$\times$) \\
  &  & 5 & 632 (\textbf{1.07}$\times$) & 536 (1.04$\times$) & 676 (\textbf{1.10}$\times$) & 898 (\textbf{1.04}$\times$) & 799 (\textbf{1.03}$\times$) & 993 (\textbf{1.06}$\times$) \\
  &  & 7 & 606 (1.02$\times$) & 516 (1.00$\times$) & 672 (1.10$\times$) & 879 (1.02$\times$) & 758 (0.98$\times$) & 988 (1.06$\times$) \\
  \midrule
  \multirow{6}{*}{20B}
  & \multirow{3}{*}{AR} & 3 & \underline{929} & \underline{894} & 957 & 1538 & \underline{1524} & 1612 \\
  &  & 5 & 912 & 851 & \underline{968} & \underline{1562} & 1486 & \underline{1696} \\
  &  & 7 & 833 & 776 & 912 & 1407 & 1387 & 1600 \\
  \cmidrule{2-9}
  & \multirow{3}{*}{P-EAGLE} & 3 & 1001 (1.08$\times$) & 965 (1.08$\times$) & 1064 (1.10$\times$) & 1597 (1.02$\times$) & 1663 (1.09$\times$) & 1762 (1.04$\times$) \\
  &  & 5 & 1173 (1.26$\times$) & 1087 (1.22$\times$) & 1218 (1.26$\times$) & 1876 (1.20$\times$) & 1883 (\textbf{1.24}$\times$) & 2054 (1.21$\times$) \\
  &  & 7 & 1205 (\textbf{1.30}$\times$) & 1135 (\textbf{1.27}$\times$) & 1320 (\textbf{1.36}$\times$) & 1950 (\textbf{1.25}$\times$) & 1882 (1.23$\times$) & 2147 (\textbf{1.27}$\times$) \\
  \midrule
  \multirow{6}{*}{Qwen 30B}
  & \multirow{3}{*}{AR} & 3 & 674 & \underline{547} & \underline{604} & 1046 & \underline{869} & 932 \\
  &  & 5 & \underline{737} & 521 & 602 & \underline{1160} & 825 & \underline{992} \\
  &  & 7 & 725 & 493 & 576 & 1107 & 784 & 932 \\
  \cmidrule{2-9}
  & \multirow{3}{*}{P-EAGLE} & 3 & 696 (0.94$\times$) & 559 (1.02$\times$) & 613 (1.01$\times$) & 1066 (0.92$\times$) & 871 (1.00$\times$) & 967 (0.97$\times$) \\
  &  & 5 & 820 (1.11$\times$) & 571 (\textbf{1.04}$\times$) & 653 (\textbf{1.08}$\times$) & 1225 (1.06$\times$) & 907 (\textbf{1.04}$\times$) & 1049 (\textbf{1.06}$\times$) \\
  &  & 7 & 860 (\textbf{1.17}$\times$) & 558 (1.02$\times$) & 638 (1.06$\times$) & 1285 (\textbf{1.11}$\times$) & 882 (1.01$\times$) & 1030 (1.04$\times$) \\
  \bottomrule
  \end{tabular}
  }
  \vspace{-1.em}
  \end{table*}

\subsection{Experimental results from vLLM}
\label{sec:exp_e2e}


We implement P-EAGLE's parallel drafting in vLLM and report the Output Tokens Per Second (OTPS) across all three models and benchmarks in Table~\ref{tab:otps}. A key advantage of P-EAGLE is its ability to draft more tokens without proportional cost increase—while AR EAGLE-3 achieves optimal throughput at K=3, P-EAGLE efficiently scales to K=5–7 by generating all draft tokens in a single forward pass. This higher speculation depth reduces the total number of generation iterations required to complete a response. As a result, P-EAGLE achieves up to 1.36× speedup at concurrency C=2, with gains varying by model size: 1.27×–1.36× for 20B, 1.04×–1.10× for 120B, and 1.04×–1.17× for Qwen 30B. At higher concurrency (C=4), speedups remain strong for 20B (1.24×–1.27×) while moderating for larger models (1.03×–1.11×) as verification latency becomes dominant—for MoE models, expert routing overhead scales with batch size, shifting the bottleneck from drafting to verification. For Qwen 30B on HumanEval at K=3, P-EAGLE shows slight slowdowns (0.98× at C=2, 0.92× at C=4) due to its deeper architecture: a single 4-layer forward pass can exceed three sequential 1-layer passes at low speculation depth. This overhead is amortized at K=5–7, where speedups reach 1.11×–1.17×.

\section{Related Work}
A variety of techniques have been explored to accelerate inference in large language models (LLMs), such as quantization and knowledge distillation. However, these approaches generally trade model performance for speed. Speculative sampling enables lossless acceleration by verifying drafted tokens in parallel using the target model~\citep{leviathan2023fast, chen2023accelerating}. Early works primarily relied on smaller standalone models~\citep{miao2023specinfer} or retrieval-based approaches, such as REST~\citep{he2023rest}, to generate draft candidates. More recent methods like EAGLE, Mixture of Attentions, and HASS exploit intermediate representations of the target model to improve decoding efficiency~\cite{li2025eagle3,zimmer2025mixtureattentions,zhang2024learning}. GLIDE and CAPE~\cite{glide_cape} instead reuse the target model’s key--value cache and confidence scores. However, these techniques still only generate one token per forward-pass through the draft model, limiting throughput.


Parallel token drafting mitigates the autoregressive bottleneck by predicting multiple tokens per forward pass. Self-drafting methods modify the target model to produce multiple speculative tokens in parallel~\cite{gloeckle2024better,cai2024medusa,monea2023pass}. ParallelSpec shows that a discrete, EAGLE-style parallel drafter can outperform autoregressive EAGLE and self-drafting Medusa in speed~\cite{xiao2024parallelspec}, while PARD reduces the training cost of parallel prediction via Conditional Drop-token~\cite{an2025pard}. In parallel, Falcon improves semi-autoregressive speculative decoding through dependency-aware training and a custom decoding tree, targeting higher draft quality under SAR generation~\cite{gao2025falcon}. Cascade Speculative Drafting instead reduces drafting latency by cascading multiple draft models (down to a statistical model) and allocating computation by draft position~\cite{chen2024cascade}. In contrast, P-EAGLE uses a target-conditioned EAGLE drafter with parallel multi-token prediction, and focuses on scalable long-context training for extended reasoning workloads.

As generation lengths have increased, there have been several approaches to supporting long-context workloads. Some techniques limit the set of tokens preserved in the KV-cache to limit attention overheads~\cite{xiao2024efficient,Beltagy2020Longformer,zhangH2O,sun2024triforce}. Others quantize the KV cache~\cite{xiao2023smoothquant,hooper2024kvquant,shengFlexGen,liu2024kivi,Yue2024WKVQuantQW}. While these techniques increase the maximum supportable context length, they can lead to loss of model performance. Sadhukhan et al. combined these lossy techniques with speculative decoding to provide net lossless speedup~\cite{chen2024magicdec}. These techniques are orthogonal to our approach and could be combined to further increase context lengths.

\section{Conclusion}

We presented P-EAGLE, transforming EAGLE-style speculative decoding from autoregressive to parallel multi-token prediction via learnable shared hidden state and mask token embeddings. Our training framework with mask pre-computation and sequence partitioning enables scalable long-context training, addressing a key bottleneck in prior parallel drafting approaches. We implement P-EAGLE in vLLM, achieving 1.10×–1.36× speedup over autoregressive EAGLE-3 across GPT-OSS 120B, GPT-OSS 20B, and Qwen3-Coder 30B. These results establish parallel drafting as a viable direction for production LLM acceleration. 
\newpage




\nocite{langley00}

\bibliography{references}

@article{zhang2024learning,
  title={Learning harmonized representations for speculative sampling},
  author={Zhang, Lefan and Wang, Xiaodan and Huang, Yanhua and Xu, Ruiwen},
  journal={arXiv preprint arXiv:2408.15766},
  year={2024}
}

@article{li2024eagle,
  title={EAGLE: Speculative Sampling Requires Rethinking Feature Uncertainty},
  author={Li, Yuhui and Wei, Fangyun and Zhang, Chao and Zhang, Hongyang},
  journal={arXiv preprint arXiv:2401.15077},
  year={2024}
}

@inproceedings{gao2025falcon,
  title={Falcon: Faster and parallel inference of large language models through enhanced semi-autoregressive drafting and custom-designed decoding tree},
  author={Gao, Xiangxiang and Xie, Weisheng and Xiang, Yiwei and Ji, Feng},
  booktitle={Proceedings of the AAAI Conference on Artificial Intelligence},
  volume={39},
  number={22},
  pages={23933--23941},
  year={2025}
}

@article{chen2024cascade,
  title={Cascade speculative drafting for even faster llm inference},
  author={Chen, Ziyi and Yang, Xiaocong and Lin, Jiacheng and Sun, Chenkai and Chang, Kevin C and Huang, Jie},
  journal={Advances in Neural Information Processing Systems},
  volume={37},
  pages={86226--86242},
  year={2024}
}

@article{an2025pard,
  title={{PARD}: Accelerating {LLM} Inference with Low-Cost Parallel Draft Model Adaptation},
  author={An, Zihao and Bai, Huajun and Liu, Ziqiong and Li, Dong and Barsoum, Emad},
  journal={arXiv preprint arXiv:2504.18583},
  year={2025}
}

@article{li2025eagle3,
  title={{EAGLE-3}: Scaling up Inference Acceleration of Large Language Models via Training-Time Test},
  author={Li, Yuhui and Wei, Fangyun and Zhang, Chao and Zhang, Hongyang},
  journal={arXiv preprint arXiv:2503.01840},
  year={2025}
}

@article{leviathan2023fast,
  title={Fast Inference from Transformers via Speculative Decoding},
  author={Leviathan, Yaniv and Kalman, Matan and Matias, Yossi},
  journal={International Conference on Machine Learning},
  year={2023}
}

@article{chen2023accelerating,
  title={Accelerating Large Language Model Decoding with Speculative Sampling},
  author={Chen, Charlie and Borgeaud, Sebastian and Irving, Geoffrey and Lespiau, Jean-Baptiste and Sifre, Laurent and Jumper, John},
  journal={arXiv preprint arXiv:2302.01318},
  year={2023}
}

@article{miao2023specinfer,
  title={SpecInfer: Accelerating Generative Large Language Model Serving with Tree-based Speculative Inference and Verification},
  author={Miao, Xupeng and Oliaro, Gabriele and Zhang, Zhihao and Cheng, Xinhao and Wang, Zeyu and Wong, Rae Ying Yee and Chen, Zhuoming and Arfeen, Daiyaan and Abhyankar, Reyna and Jia, Zhihao},
  journal={arXiv preprint arXiv:2305.09781},
  year={2023}
}

@article{cai2024medusa,
  title={Medusa: Simple LLM Inference Acceleration Framework with Multiple Decoding Heads},
  author={Cai, Tianle and Li, Yuhong and Geng, Zhengyang and Peng, Hongwu and Lee, Jason D and Chen, Deming and Dao, Tri},
  journal={arXiv preprint arXiv:2401.10774},
  year={2024}
}

@article{he2023rest,
  title={REST: Retrieval-Based Speculative Decoding},
  author={He, Zhenyu and Zhong, Zexuan and Cai, Tianle and Lee, Jason D and He, Di},
  journal={arXiv preprint arXiv:2311.08252},
  year={2023}
}

@article{fu2024break,
  title={Break the Sequential Dependency of LLM Inference Using Lookahead Decoding},
  author={Fu, Yichao and Bailis, Peter and Stoica, Ion and Zhang, Hao},
  journal={arXiv preprint arXiv:2402.02057},
  year={2024}
}

@article{su2024roformer,
  title={{RoFormer}: Enhanced Transformer with Rotary Position Embedding},
  author={Su, Jianlin and Lu, Yu and Pan, Shengfeng and Murtadha, Ahmed and Wen, Bo and Liu, Yunfeng},
  journal={Neurocomputing},
  volume={568},
  pages={127063},
  year={2024},
  publisher={Elsevier}
}

@article{liu2025rethinking,
  title={Rethinking RoPE: A Mathematical Blueprint for N-dimensional Positional Encoding},
  author={Liu, Haiping and Zhou, Hongpeng},
  journal={arXiv preprint arXiv:2504.06308},
  year={2025}
}

@article{xiao2024parallelspec,
  title={Parallelspec: Parallel drafter for efficient speculative decoding},
  author={Xiao, Zilin and Zhang, Hongming and Ge, Tao and Ouyang, Siru and Ordonez, Vicente and Yu, Dong},
  journal={arXiv preprint arXiv:2410.05589},
  year={2024}
}

@inproceedings{kwon2023efficient,
  title={Efficient Memory Management for Large Language Model Serving with PagedAttention},
  author={Woosuk Kwon and Zhuohan Li and Siyuan Zhuang and Ying Sheng and Lianmin Zheng and Cody Hao Yu and Joseph E. Gonzalez and Hao Zhang and Ion Stoica},
  booktitle={Proceedings of the ACM SIGOPS 29th Symposium on Operating Systems Principles},
  year={2023}
}

@article{zheng2024sglang,
  title={Sglang: Efficient execution of structured language model programs},
  author={Zheng, Lianmin and Yin, Liangsheng and Xie, Zhiqiang and Sun, Chuyue Livia and Huang, Jeff and Yu, Cody Hao and Cao, Shiyi and Kozyrakis, Christos and Stoica, Ion and Gonzalez, Joseph E and others},
  journal={Advances in neural information processing systems},
  volume={37},
  pages={62557--62583},
  year={2024}
}

@misc{tensorrt_llm,
  author       = {NVIDIA},
  title        = {TensorRT-LLM: High-Performance Inference Optimization for Large Language Models},
  year         = {2023--2025},
  howpublished = {\url{https://github.com/NVIDIA/TensorRT-LLM}},
  note         = {Version X.Y.Z},
}

@inproceedings{ding2023enhancing,
  title={Enhancing chat language models by scaling high-quality instructional conversations},
  author={Ding, Ning and Chen, Yulin and Xu, Bokai and Qin, Yujia and Hu, Shengding and Liu, Zhiyuan and Sun, Maosong and Zhou, Bowen},
  booktitle={Proceedings of the 2023 Conference on Empirical Methods in Natural Language Processing},
  pages={3029--3051},
  year={2023}
}

@misc{openai2025gptoss120bgptoss20bmodel,
      title={gpt-oss-120b \& gpt-oss-20b Model Card}, 
      author={OpenAI},
      year={2025},
      eprint={2508.10925},
      archivePrefix={arXiv},
      primaryClass={cs.CL},
      url={https://arxiv.org/abs/2508.10925}, 
}

@misc{qwen3technicalreport,
      title={Qwen3 Technical Report}, 
      author={Qwen Team},
      year={2025},
      eprint={2505.09388},
      archivePrefix={arXiv},
      primaryClass={cs.CL},
      url={https://arxiv.org/abs/2505.09388},
}

@article{grattafiori2024llama,
  title={The llama 3 herd of models},
  author={Grattafiori, Aaron and Dubey, Abhimanyu and Jauhri, Abhinav and Pandey, Abhinav and Kadian, Abhishek and Al-Dahle, Ahmad and Letman, Aiesha and Mathur, Akhil and Schelten, Alan and Vaughan, Alex and others},
  journal={arXiv preprint arXiv:2407.21783},
  year={2024}
}

@article{chen2021codex,
  title={Evaluating Large Language Models Trained on Code},
  author={Mark Chen and Jerry Tworek and Heewoo Jun and Qiming Yuan and Henrique Ponde de Oliveira Pinto and Jared Kaplan and Harri Edwards and Yuri Burda and Nicholas Joseph and Greg Brockman and Alex Ray and Raul Puri and Gretchen Krueger and Michael Petrov and Heidy Khlaaf and Girish Sastry and Pamela Mishkin and Brooke Chan and Scott Gray and Nick Ryder and Mikhail Pavlov and Alethea Power and Lukasz Kaiser and Mohammad Bavarian and Clemens Winter and Philippe Tillet and Felipe Petroski Such and Dave Cummings and Matthias Plappert and Fotios Chantzis and Elizabeth Barnes and Ariel Herbert-Voss and William Hebgen Guss and Alex Nichol and Alex Paino and Nikolas Tezak and Jie Tang and Igor Babuschkin and Suchir Balaji and Shantanu Jain and William Saunders and Christopher Hesse and Andrew N. Carr and Jan Leike and Josh Achiam and Vedant Misra and Evan Morikawa and Alec Radford and Matthew Knight and Miles Brundage and Mira Murati and Katie Mayer and Peter Welinder and Bob McGrew and Dario Amodei and Sam McCandlish and Ilya Sutskever and Wojciech Zaremba},
  year={2021},
  eprint={2107.03374},
  archivePrefix={arXiv},
  primaryClass={cs.LG}
}

@article{zheng2023judging,
  title={Judging llm-as-a-judge with mt-bench and chatbot arena},
  author={Zheng, Lianmin and Chiang, Wei-Lin and Sheng, Ying and Zhuang, Siyuan and Wu, Zhanghao and Zhuang, Yonghao and Lin, Zi and Li, Zhuohan and Li, Dacheng and Xing, Eric and others},
  journal={Advances in neural information processing systems},
  volume={36},
  pages={46595--46623},
  year={2023}
}

@article{cobbe2021gsm8k,
  title={Training Verifiers to Solve Math Word Problems},
  author={Cobbe, Karl and Kosaraju, Vineet and Bavarian, Mohammad and Chen, Mark and Jun, Heewoo and Kaiser, Lukasz and Plappert, Matthias and Tworek, Jerry and Hilton, Jacob and Nakano, Reiichiro and Hesse, Christopher and Schulman, John},
  journal={arXiv preprint arXiv:2110.14168},
  year={2021}
}

@article{ahmad2025opencodeinstruct,
      title={OpenCodeInstruct: A Large-scale Instruction Tuning Dataset for Code LLMs}, 
      author={Wasi Uddin Ahmad and Aleksander Ficek and Mehrzad Samadi and Jocelyn Huang and Vahid Noroozi and Somshubra Majumdar and Boris Ginsburg},
      year={2025},
      eprint={2504.04030},
      archivePrefix={arXiv},
      primaryClass={cs.CL},
      url={https://arxiv.org/abs/2504.04030}, 
}

@misc{monea2023pass,
      title={PaSS: Parallel Speculative Sampling}, 
      author={Giovanni Monea and Armand Joulin and Edouard Grave},
      year={2023},
      eprint={2311.13581},
      archivePrefix={arXiv},
      primaryClass={cs.CL},
      url={https://arxiv.org/abs/2311.13581}, 
}

@article{lin2025bita,
  title={BiTA: Bi-Directional Tuning for Lossless Acceleration in Large Language Models},
  author={Lin, Feng and Yi, Hanling and Yang, Yifan and Li, Hongbin and Yu, Xiaotian and Lu, Guangming and Xiao, Rong},
  journal={Expert Systems with Applications},
  pages={127305},
  year={2025},
  publisher={Elsevier}
}

@inproceedings{
    xiao2024efficient,
    title={Efficient Streaming Language Models with Attention Sinks},
    author={Guangxuan Xiao and Yuandong Tian and Beidi Chen and Song Han and Mike Lewis},
    booktitle={The Twelfth International Conference on Learning Representations},
    year={2024},
    url={https://openreview.net/forum?id=NG7sS51zVF}
}

@article{Beltagy2020Longformer,
  title={Longformer: The Long-Document Transformer},
  author={Iz Beltagy and Matthew E. Peters and Arman Cohan},
  journal={arXiv:2004.05150},
  year={2020},
}

@inproceedings{zhangH2O,
    author = {Zhang, Zhenyu and Sheng, Ying and Zhou, Tianyi and Chen, Tianlong and Zheng, Lianmin and Cai, Ruisi and Song, Zhao and Tian, Yuandong and R\'{e}, Christopher and Barrett, Clark and Wang, Zhangyang and Chen, Beidi},
    title = {H2O: heavy-hitter oracle for efficient generative inference of large language models},
    year = {2023},
    publisher = {Curran Associates Inc.},
    address = {Red Hook, NY, USA},
    booktitle = {Proceedings of the 37th International Conference on Neural Information Processing Systems},
    articleno = {1506},
    numpages = {50},
    location = {New Orleans, LA, USA},
    series = {NIPS '23}
}

@inproceedings{sun2024triforce,
    title={TriForce: Lossless Acceleration of Long Sequence Generation with Hierarchical Speculative Decoding},
    author={Hanshi Sun and Zhuoming Chen and Xinyu Yang and Yuandong Tian and Beidi Chen},
    booktitle={First Conference on Language Modeling},
    year={2024},
    url={https://openreview.net/forum?id=HVK6nl3i97}
}

@article{chen2024magicdec,
  title={MagicDec: Breaking the Latency-Throughput Tradeoff for Long Context Generation with Speculative Decoding},
  author={Chen, Jian and Tiwari, Vashisth and Sadhukhan, Ranajoy and Chen, Zhuoming and Shi, Jinyuan and Yen, Ian En-Hsu and Chen, Beidi},
  journal={arXiv preprint arXiv:2408.11049},
  year={2024}
}

@inproceedings{gloeckle2024better,
    author = {Gloeckle, Fabian and Idrissi, Badr Youbi and Rozi\`{e}re, Baptiste and Lopez-Paz, David and Synnaeve, Gabriel},
    title = {Better \& faster large language models via multi-token prediction},
    year = {2024},
    publisher = {JMLR.org},
    booktitle = {Proceedings of the 41st International Conference on Machine Learning},
    articleno = {629},
    numpages = {29},
    location = {Vienna, Austria},
    series = {ICML'24}
}

@InProceedings{xiao2023smoothquant,
    title = {{S}mooth{Q}uant: Accurate and Efficient Post-Training Quantization for Large Language Models},
    author = {Xiao, Guangxuan and Lin, Ji and Seznec, Mickael and Wu, Hao and Demouth, Julien and Han, Song},
    booktitle = {Proceedings of the 40th International Conference on Machine Learning},
    year = {2023}
}

@article{hooper2024kvquant,
  title={KVQuant: Towards 10 Million Context Length LLM Inference with KV Cache Quantization},
  author={Hooper, Coleman and Kim, Sehoon and Mohammadzadeh, Hiva and Mahoney, Michael W and Shao, Yakun Sophia and Keutzer, Kurt and Gholami, Amir},
  journal={arXiv preprint arXiv:2401.18079},
  year={2024}
}

@inproceedings{shengFlexGen,
    author = {Sheng, Ying and Zheng, Lianmin and Yuan, Binhang and Li, Zhuohan and Ryabinin, Max and Chen, Beidi and Liang, Percy and R\'{e}, Christopher and Stoica, Ion and Zhang, Ce},
    title = {FlexGen: high-throughput generative inference of large language models with a single GPU},
    year = {2023},
    publisher = {JMLR.org},
    booktitle = {Proceedings of the 40th International Conference on Machine Learning},
    articleno = {1288},
    numpages = {23},
    location = {Honolulu, Hawaii, USA},
    series = {ICML'23}
}

@article{liu2024kivi,
  title={KIVI: A Tuning-Free Asymmetric 2bit Quantization for KV Cache},
  author={Liu, Zirui and Yuan, Jiayi and Jin, Hongye and Zhong, Shaochen and Xu, Zhaozhuo and Braverman, Vladimir and Chen, Beidi and Hu, Xia},
  journal={arXiv preprint arXiv:2402.02750},
  year={2024}
}

@article{Yue2024WKVQuantQW,
  title={WKVQuant: Quantizing Weight and Key/Value Cache for Large Language Models Gains More},
  author={Yuxuan Yue and Zhihang Yuan and Haojie Duanmu and Sifan Zhou and Jianlong Wu and Liqiang Nie},
  journal={ArXiv},
  year={2024},
  volume={abs/2402.12065},
  url={https://api.semanticscholar.org/CorpusID:267750952}
}

@misc{zimmer2025mixtureattentions,
      title={Mixture of Attentions For Speculative Decoding}, 
      author={Matthieu Zimmer and Milan Gritta and Gerasimos Lampouras and Haitham Bou Ammar and Jun Wang},
      year={2025},
      eprint={2410.03804},
      archivePrefix={arXiv},
      primaryClass={cs.CL},
      url={https://arxiv.org/abs/2410.03804}, 
}

@inproceedings{glide_cape,
    author = {Du, Cunxiao and Jiang, Jing and Yuanchen, Xu and Wu, Jiawei and Yu, Sicheng and Li, Yongqi and Li, Shenggui and Xu, Kai and Nie, Liqiang and Tu, Zhaopeng and You, Yang},
    title = {GLIDE with a CAPE: a low-hassle method to accelerate speculative decoding},
    year = {2024},
    publisher = {JMLR.org},
    booktitle = {Proceedings of the 41st International Conference on Machine Learning},
    articleno = {465},
    numpages = {17},
    location = {Vienna, Austria},
    series = {ICML'24}
}
\bibliographystyle{icml2026}

\newpage
\appendix
\onecolumn


\section{Training configuration of ParallelSpec and PARD}
\label{appendix-sec:scal-compare}
All models are trained on 8 H200 GPUs with a global batch size of 64 and 8-step gradient accumulation. The target model is GPT-OSS 120B. We use a linear learning-rate schedule with a peak learning rate of $1 \times 10^{-4}$ and a warmup ratio of 0.0025.

\section{Theoretical Justification for Redundancy of Hidden State Augmentation}
\label{appendix:theory}

We provide theoretical justification for why augmenting the shared hidden state---whether through prediction depth embeddings or projected NTP context---is redundant. The key observation is that prediction depth $g$ is a deterministic function of sequence position $p$: in the MTP training structure, each position maps to exactly one prediction depth via $g(p)$. We prove that absolute position $p$ can be uniquely recovered from RoPE-based attention scores. Since $g(p)$ is computable from $p$, introducing depth-dependent hidden states provides no additional information to the model.

\citet{liu2025rethinking} prove that the mapping $\delta \mapsto R_\delta$ from relative position between a pair of query and key $(\mathbf{q}, \mathbf{k}) \in \mathbb{R}^d \times \mathbb{R}^d$ to RoPE rotation matrix is injective, i.e., distinct relative positions always produce distinct rotation matrices. With a reference token at known position (e.g., BOS at position 0), this implies absolute position is recoverable from the rotation matrix. However, transformers do not observe rotation matrices directly; they compute scalar attention scores $\mathbf{q}^\top R_\delta \mathbf{k}$. The question then arises: is the attention score mapping $\delta \mapsto \mathbf{q}^\top R_\delta \mathbf{k}$ also injective? We prove that the set of query-key pairs for which injectivity fails has Lebesgue measure zero.

\begin{theorem}[\citet{liu2025rethinking}]
\label{thm:liu_matrix_injectivity}
The mapping $\delta \mapsto R_\delta$ from relative position to RoPE rotation matrix is injective, i.e., $R_{\delta_1} = R_{\delta_2} \implies \delta_1 = \delta_2$.
\end{theorem}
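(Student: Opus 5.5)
The plan is to prove the statement directly from the explicit block-diagonal form of the RoPE matrix, rather than invoking \citet{liu2025rethinking} as a black box. Recall that for even head dimension $d$, RoPE acts on the $d/2$ coordinate pairs by
\begin{equation*}
R_\delta = \mathrm{diag}\bigl(\rho(\delta\theta_0), \rho(\delta\theta_1), \ldots, \rho(\delta\theta_{d/2-1})\bigr), \qquad \rho(\phi) = \begin{pmatrix} \cos\phi & -\sin\phi \\ \sin\phi & \cos\phi \end{pmatrix},
\end{equation*}
with frequencies $\theta_i = b^{-2i/d}$ for a base $b>1$ (e.g.\ $b=10^4$). Relative positions $\delta$ range over the integers, since they are differences of token indices. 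The first step is to reduce matrix equality to conditions on the angles. Because the matrix is block diagonal, $R_{\delta_1} = R_{\delta_2}$ holds iff $\rho(\delta_1\theta_i) = \rho(\delta_2\theta_i)$ for every $i$. A $2\times 2$ rotation determines its angle modulo $2\pi$, through the pair $(\cos\phi, \sin\phi)$, so this is equivalent to $(\delta_1-\delta_2)\theta_i \in 2\pi\mathbb{Z}$ for all $i$.

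The second step is to show that a single frequency already forces $\delta_1=\delta_2$. Set $m = \delta_1 - \delta_2 \in \mathbb{Z}$ and take the $i=0$ block, where $\theta_0 = b^0 = 1$. The condition becomes $m = 2\pi k$ for some $k\in\mathbb{Z}$. If $k \neq 0$ this gives $\pi = m/(2k) \in \mathbb{Q}$, contradicting the irrationality of $\pi$. Hence $k=0$, so $m=0$ and $\delta_1=\delta_2$, which is the claimed injectivity. Only one rotation block is needed, so the other frequencies play no role.

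I expect the main subtlety to be in how the hypotheses are stated, not in the computation. The argument uses two facts: $\delta$ is an integer, and at least one frequency $\theta_i$ is a nonzero rational multiple of $1$, or more generally satisfies $\theta_i/\pi \notin \mathbb{Q}$. The standard RoPE convention gives $\theta_0=1$, so this holds. If the positions were real-valued, or every frequency were a rational multiple of $\pi$, injectivity could fail: $R_\delta$ would be periodic in $\delta$. I will therefore state these conventions explicitly before the proof. I will also note the more general form: for any family of frequencies with some $\theta_i\notin \pi\mathbb{Q}$, the map $\delta\mapsto R_\delta$ is injective on $\mathbb{Z}$ by the same argument. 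This makes clear that the result does not depend on the particular base $b$.
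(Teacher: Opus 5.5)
Your argument is correct, and it takes a different route from the paper, which gives no proof of its own. The paper imports the statement from \citet{liu2025rethinking}, where injectivity enters as the ``reversibility'' property of RoPE inside a Lie-algebraic framework (RoPE as the exponential of commuting skew-symmetric generators). Your proof is elementary and self-contained. Block-diagonality reduces $R_{\delta_1}=R_{\delta_2}$ to $(\delta_1-\delta_2)\theta_i\in 2\pi\mathbb{Z}$ for every $i$. The single block with $\theta_0=1$, together with the irrationality of $\pi$, then forces $\delta_1=\delta_2$.

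What your version buys is that it exposes the hypotheses the paper's unconditional statement silently relies on: integer relative positions, and at least one frequency with $\theta_i\notin\pi\mathbb{Q}$. These matter. If all frequencies lie in $\pi\mathbb{Q}$, some nonzero integer $\delta$ gives $R_\delta=I=R_0$. With a single block and real-valued $\delta$, the map is periodic. In either case the statement as written is false. The citation route buys generality (the $N$-dimensional setting) at the cost of being a black box.

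One small correction to your side remark that real-valued positions would make $R_\delta$ periodic. That is not automatic for standard RoPE. With $d\ge 4$ the ratio $\theta_1/\theta_0=b^{-2/d}$ is typically irrational (e.g.\ $10^{-1/16}$ for $b=10^4$, $d=128$). Two incommensurable frequencies already rule out a nonzero common period. Periodicity on $\mathbb{R}$ requires all nonzero frequencies to be pairwise commensurable.
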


\begin{lemma}
\label{lem:measure_zero_kernel}
Let $M \in \mathbb{R}^{d \times d}$ be a non-zero matrix. The set $\mathcal{Z}_M = \{(\mathbf{q}, \mathbf{k}) \in \mathbb{R}^d \times \mathbb{R}^d : \mathbf{q}^\top M \mathbf{k} = 0\}$ has Lebesgue measure zero in $\mathbb{R}^{2d}$.
\end{lemma}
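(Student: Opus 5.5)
The plan is to observe that $f_M(\mathbf{q},\mathbf{k}) = \mathbf{q}^\top M \mathbf{k} = \sum_{i,j} M_{ij} q_i k_j$ is a polynomial in the $2d$ coordinates of $(\mathbf{q},\mathbf{k})$. Then $\mathcal{Z}_M$ is its zero set. The argument has two steps: show that $f_M$ is not the zero polynomial, and then invoke the standard fact that the zero set of a nonzero polynomial on $\mathbb{R}^{n}$ has Lebesgue measure zero.

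\textbf{Step 1 (nontriviality).} Since $M \neq 0$, some entry $M_{ij} \neq 0$. Taking $\mathbf{q} = \mathbf{e}_i$ and $\mathbf{k} = \mathbf{e}_j$ gives $f_M(\mathbf{e}_i,\mathbf{e}_j) = M_{ij} \neq 0$, so $f_M$ is not identically zero. Equivalently, the coefficient of the monomial $q_i k_j$ is $M_{ij}$, which is nonzero.

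\textbf{Step 2 (measure zero).} Here I would avoid citing the general polynomial result and instead give a direct Fubini argument that exploits the bilinear structure. Fix $\mathbf{k}$ and write $f_M(\mathbf{q},\mathbf{k}) = \mathbf{q}^\top (M\mathbf{k})$.
\begin{itemize}
\item If $M\mathbf{k} \neq 0$, the slice $\{\mathbf{q} : \mathbf{q}^\top M\mathbf{k} = 0\}$ is a hyperplane in $\mathbb{R}^d$, which has $d$-dimensional measure zero.
\item If $M\mathbf{k} = 0$, the slice is all of $\mathbb{R}^d$. However, the set of such $\mathbf{k}$ is $\ker M$, a proper subspace because $M \neq 0$. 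Hence $\ker M$ has measure zero in $\mathbb{R}^d$.
\end{itemize}
By Tonelli's theorem applied to the indicator of $\mathcal{Z}_M$, which is a closed and hence measurable set,
\[
\lambda_{2d}(\mathcal{Z}_M) = \int_{\mathbb{R}^d} \lambda_d\bigl(\{\mathbf{q} : \mathbf{q}^\top M \mathbf{k} = 0\}\bigr)\, d\mathbf{k} .
\]
The integrand is $0$ off $\ker M$, and $\ker M$ is a null set, so the integral vanishes.

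There is no serious obstacle in this argument. The only points requiring care are the measurability of $\mathcal{Z}_M$, which follows from the continuity of $f_M$, and the fact that the exceptional fibre set $\ker M$ is a proper subspace, which is exactly where the hypothesis $M \neq 0$ enters. In the subsequent application, I expect the lemma to be used with $M = R_{\delta_1} - R_{\delta_2}$, which is nonzero by Theorem~\ref{thm:liu_matrix_injectivity}. Taking a countable union over pairs of distinct relative positions then preserves measure zero.
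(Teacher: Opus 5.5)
Your proof is correct, and your Step 2 takes a different route from the paper. The paper's proof is essentially your Step 1 followed by a citation. It observes that $\mathbf{q}^\top M\mathbf{k}$ is a polynomial in $2d$ variables containing the nonzero monomial $M_{i^*j^*}q_{i^*}k_{j^*}$. It then invokes the general fact that the zero set of a nonzero polynomial (a ``proper algebraic hypersurface'') is Lebesgue-null.

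You instead prove the needed fact directly, by slicing in $\mathbf{k}$ and applying Tonelli. Off $\ker M$ each slice is a hyperplane. The set $\ker M$ is a proper subspace, hence null, so the integrand there (equal to $+\infty$, with the convention $\infty\cdot 0=0$) contributes nothing.

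The paper's version is shorter and extends verbatim to arbitrary nonzero polynomials. However, it outsources all the content to a fact whose standard proof is itself an induction on the number of variables via Fubini. Your argument is a single such Fubini step, made trivial by linearity in $\mathbf{q}$. It is fully self-contained, and it makes measurability and the role of $M\neq 0$ explicit. On your route Step 1 is not even needed, since Step 2 uses only $\ker M\neq\mathbb{R}^d$.

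Your slicing also gives a sharper, fibrewise statement: for every fixed $\mathbf{k}\notin\ker M$, the bad $\mathbf{q}$ form a hyperplane. This is the form that Corollary~\ref{cor:absolute_position} actually needs, since there $\mathbf{k}_r$ is fixed and only $\mathbf{q}$ varies, provided $\mathbf{k}_r$ avoids each $\ker(R_{\delta_1}-R_{\delta_2})$. A null set in $\mathbb{R}^{2d}$ by itself only guarantees null slices for almost every $\mathbf{k}$.

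Likewise, your closing remark about the countable union over integer pairs $\delta_1\neq\delta_2$ supplies a step that the paper's proof of Theorem~\ref{thm:score_injectivity} leaves implicit.
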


\begin{proof}
Since $M \neq 0$, there exist indices $(i^*, j^*)$ such that $M_{i^* j^*} \neq 0$. The bilinear form $\mathbf{q}^\top M \mathbf{k} = \sum_{i,j} q_i M_{ij} k_j$ is a polynomial in $2d$ variables with the non-zero monomial $M_{i^* j^*} q_{i^*} k_{j^*}$. Since a non-zero polynomial defines a proper algebraic hypersurface, its zero set has Lebesgue measure zero.
\end{proof}

\begin{theorem}[Attention Score-Level Injectivity]
\label{thm:score_injectivity}
The set of $(\mathbf{q}, \mathbf{k}) \in \mathbb{R}^d \times \mathbb{R}^d$ for which the attention score function $f_{\mathbf{q},\mathbf{k}}(\delta) = \mathbf{q}^\top R_\delta \mathbf{k}$ fails to be injective in relative position $\delta$ has Lebesgue measure zero.
\end{theorem}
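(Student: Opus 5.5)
The plan is to write the failure set as a countable union of the null sets controlled by Lemma~\ref{lem:measure_zero_kernel}, with Theorem~\ref{thm:liu_matrix_injectivity} supplying the non-vanishing hypothesis. Throughout, relative positions $\delta$ range over the integers, or over the finite set $\{-(L_{\max}-1),\dots,L_{\max}-1\}$ for a maximum context length $L_{\max}$. This is the setting in which RoPE is actually applied. The key structural fact is that the index set of pairs $(\delta_1,\delta_2)$ with $\delta_1\neq\delta_2$ is countable.

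\textbf{Step 1 (rewrite the failure set).} The map $f_{\mathbf q,\mathbf k}$ fails to be injective exactly when there exist $\delta_1\neq\delta_2$ with $\mathbf q^\top R_{\delta_1}\mathbf k=\mathbf q^\top R_{\delta_2}\mathbf k$. Equivalently, $\mathbf q^\top (R_{\delta_1}-R_{\delta_2})\mathbf k=0$. So the failure set is
\[
\mathcal F=\bigcup_{\delta_1\neq\delta_2}\mathcal Z_{M_{\delta_1,\delta_2}},\qquad M_{\delta_1,\delta_2}:=R_{\delta_1}-R_{\delta_2}.
\]

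\textbf{Step 2 (each piece is null).} By Theorem~\ref{thm:liu_matrix_injectivity}, $\delta_1\neq\delta_2$ implies $R_{\delta_1}\neq R_{\delta_2}$, so $M_{\delta_1,\delta_2}\neq 0$. Lemma~\ref{lem:measure_zero_kernel} then gives that $\mathcal Z_{M_{\delta_1,\delta_2}}$ has Lebesgue measure zero in $\mathbb R^{2d}$.

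\textbf{Step 3 (countable union).} Since the index set is countable (finite, in the bounded-context case), countable subadditivity of Lebesgue measure gives $\lambda(\mathcal F)\le\sum_{\delta_1\neq\delta_2}\lambda(\mathcal Z_{M_{\delta_1,\delta_2}})=0$. This proves the theorem. As a corollary, for almost every $(\mathbf q,\mathbf k)$ the score determines $\delta$, and hence, with a reference token at position $0$, the absolute position $p$ and the depth $g(p)$.

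\textbf{Main obstacle.} The delicate point is the countability of the position domain. If $\delta$ were allowed to range over $\mathbb R$, the union in Step 1 would be uncountable and the argument would break. In fact the statement would then be false: $\delta\mapsto\mathbf q^\top R_\delta\mathbf k$ is a continuous almost-periodic function, a finite sum of sinusoids, which cannot be injective on $\mathbb R$ unless it is constant. So I would state explicitly that $\delta\in\mathbb Z$. I would also check that the hypothesis of Theorem~\ref{thm:liu_matrix_injectivity} holds for the standard RoPE frequencies $\theta_i=b^{-2i/d}$. The concern is that $R_{\delta_1}=R_{\delta_2}$ would require every $(\delta_1-\delta_2)\theta_i\in 2\pi\mathbb Z$. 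This fails for $\theta_0=1$, because $\pi$ is irrational. Everything else is routine measure theory.
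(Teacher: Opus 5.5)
Your proposal is correct and follows the paper's route: Theorem~\ref{thm:liu_matrix_injectivity} gives $R_{\delta_1}-R_{\delta_2}\neq 0$, and Lemma~\ref{lem:measure_zero_kernel} then makes each pairwise coincidence set null. The paper's proof stops at a single fixed pair $\delta_1\neq\delta_2$, so your explicit countable-union step is what passes from ``each pair'' to injectivity of $f_{\mathbf q,\mathbf k}$ for almost every $(\mathbf q,\mathbf k)$; that step requires $\delta\in\mathbb{Z}$, and you rightly note the claim fails for real $\delta$.
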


\begin{proof}
Let $\mathbf{q}, \mathbf{k} \in \mathbb{R}^d$ be query and key vectors, and consider any $\delta_1 \neq \delta_2$. By Theorem~\ref{thm:liu_matrix_injectivity}, $R_{\delta_1} \neq R_{\delta_2}$, thus $M = R_{\delta_1} - R_{\delta_2} \neq 0$. The condition $f_{\mathbf{q},\mathbf{k}}(\delta_1) = f_{\mathbf{q},\mathbf{k}}(\delta_2)$ can be rewritten as $\mathbf{q}^\top R_{\delta_1} \mathbf{k} = \mathbf{q}^\top R_{\delta_2} \mathbf{k}$, or equivalently $\mathbf{q}^\top (R_{\delta_1} - R_{\delta_2}) \mathbf{k} = \mathbf{q}^\top M \mathbf{k} = 0$. By Lemma~\ref{lem:measure_zero_kernel}, since $M \neq 0$, the set of $(\mathbf{q}, \mathbf{k})$ satisfying $\mathbf{q}^\top M \mathbf{k} = 0$ has Lebesgue measure zero.
\end{proof}

\begin{corollary}[Absolute Position Recovery]
\label{cor:absolute_position}
Let $r$ be a reference position with key vector $\mathbf{k}_r \in \mathbb{R}^d$, and let $m$ be a query position with query vector $\mathbf{q} \in \mathbb{R}^d$. The set of query vectors for which the attention score $s_{m,r} = \mathbf{q}^\top R_{r-m} \mathbf{k}_r$ fails to uniquely determine $m$ has Lebesgue measure zero.
\end{corollary}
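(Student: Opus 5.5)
Here $\mathbf{k}_r$ is held fixed and only the query $\mathbf{q}$ varies. The exceptional set is therefore a subset of $\mathbb{R}^d$, not of $\mathbb{R}^{2d}$, so Theorem~\ref{thm:score_injectivity} cannot be quoted verbatim. I would instead rerun its argument with $\mathbf{k}_r$ frozen.

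\textbf{Reduction.} The score $s_{m,r}$ fails to determine $m$ exactly when some $m_1 \neq m_2$ give $\mathbf{q}^\top R_{r-m_1}\mathbf{k}_r = \mathbf{q}^\top R_{r-m_2}\mathbf{k}_r$. Write $\delta_i = r - m_i$; then $m_1 \neq m_2$ iff $\delta_1 \neq \delta_2$. For each such pair the bad queries are
\[
\{\mathbf{q} : \mathbf{q}^\top \mathbf{v}_{m_1,m_2} = 0\}, \qquad \mathbf{v}_{m_1,m_2} = (R_{\delta_1} - R_{\delta_2})\mathbf{k}_r .
\]
Position indices are integers, so there are only countably many pairs. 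The full exceptional set is thus a countable union of such sets, and since a countable union of null sets is null, it suffices to show each one has measure zero.

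\textbf{Each set is null.} Each set is the zero set of the linear functional $\mathbf{q} \mapsto \mathbf{q}^\top \mathbf{v}_{m_1,m_2}$. If $\mathbf{v}_{m_1,m_2} \neq 0$, it is a hyperplane in $\mathbb{R}^d$ and has measure zero. This is the one-variable-group version of Lemma~\ref{lem:measure_zero_kernel}: a nonzero linear polynomial in $\mathbf{q}$.

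\textbf{Main obstacle.} The difficulty is ensuring $(R_{\delta_1} - R_{\delta_2})\mathbf{k}_r \neq 0$. Theorem~\ref{thm:liu_matrix_injectivity} only gives $R_{\delta_1} \neq R_{\delta_2}$, and a fixed $\mathbf{k}_r$ could lie in the kernel of the difference. For example, if $\mathbf{k}_r = 0$, every query fails and the statement is false.

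I would use the block structure of RoPE. $R_\delta$ acts on each 2-dimensional plane $j$ as rotation by $\delta\theta_j$. On that plane, $R_{\delta_1} - R_{\delta_2}$ is a scaled rotation, which is invertible unless $(\delta_1 - \delta_2)\theta_j \in 2\pi\mathbb{Z}$. So it suffices that $\mathbf{k}_r$ has a nonzero component in some plane $j$ whose frequency $\theta_j$ is not a rational multiple of $2\pi$. Standard RoPE frequencies $\theta_j = b^{-2j/d}$ satisfy this for all $j$; in particular $\theta_0 = 1$ gives $n \cdot 1 \notin 2\pi\mathbb{Z}$ for every $n \neq 0$.

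I would therefore state the mild nondegeneracy hypothesis explicitly: $\mathbf{k}_r$ has a nonzero projection onto such a plane. This holds for Lebesgue-almost every key, so it can be absorbed by Theorem~\ref{thm:score_injectivity} together with Fubini. Under this hypothesis every $\mathbf{v}_{m_1,m_2}$ is nonzero, and the countable-union argument above completes the proof.
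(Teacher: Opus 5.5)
Your proposal is correct, and at the decisive step it follows a different route from the paper. The paper's proof invokes Theorem~\ref{thm:score_injectivity} directly. That theorem says the set of pairs $(\mathbf{q},\mathbf{k}_r)$ for which $\delta\mapsto\mathbf{q}^\top R_\delta\mathbf{k}_r$ fails to be injective is null in $\mathbb{R}^{2d}$, and the paper reads this as the claim about queries with $\mathbf{k}_r$ held fixed. It also treats one pair $\delta_1\neq\delta_2$ at a time and leaves the countable union over pairs implicit. As you point out, a null set in $\mathbb{R}^{2d}$ only gives, via Fubini, that the $\mathbf{q}$-slice is null for almost every $\mathbf{k}_r$, and the statement as written genuinely fails at $\mathbf{k}_r=0$. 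Your argument instead fixes $\mathbf{k}_r$ and reduces the exceptional set to the countable union of sets $\{\mathbf{q}:\mathbf{q}^\top(R_{\delta_1}-R_{\delta_2})\mathbf{k}_r=0\}$. In place of Theorem~\ref{thm:liu_matrix_injectivity} and the bilinear-form Lemma~\ref{lem:measure_zero_kernel}, you use the concrete $2\times2$ rotation-block structure of RoPE to guarantee that each normal vector $(R_{\delta_1}-R_{\delta_2})\mathbf{k}_r$ is nonzero, so each set is a hyperplane. The paper's approach is shorter and uses nothing about $R_\delta$ beyond injectivity, but it can legitimately deliver only an almost-every-key statement. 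Yours needs an extra hypothesis and the specific form of the rotation matrices, but in return gives an explicit criterion for each individual key. For standard RoPE with integer base $b$, every $\theta_j=b^{-2j/d}$ is algebraic and nonzero, so none is a rational multiple of the transcendental $2\pi$. Hence every block of $R_{\delta_1}-R_{\delta_2}$ is invertible, and your hypothesis reduces exactly to $\mathbf{k}_r\neq 0$. You assert this for all $j$ but justify it only for $\theta_0=1$; citing the transcendence of $\pi$ (Lindemann) closes that small point.
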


\begin{proof}
Since $r$ is fixed, distinct absolute positions $m_1 \neq m_2$ correspond to distinct relative positions $\delta_1 = r - m_1 \neq r - m_2 = \delta_2$. By Theorem~\ref{thm:score_injectivity}, the set of $(\mathbf{q}, \mathbf{k}_r)$ for which the attention score function $\delta \mapsto \mathbf{q}^\top R_\delta \mathbf{k}_r$ fails to be injective has Lebesgue measure zero. Therefore, $s_{m_1,r} = \mathbf{q}^\top R_{\delta_1} \mathbf{k}_r \neq \mathbf{q}^\top R_{\delta_2} \mathbf{k}_r = s_{m_2,r}$ except on a Lebesgue measure-zero set, and absolute position $m = r - \delta$ is uniquely recoverable.
\end{proof}

\subsection{Application to Hidden State Augmentation}

Consider the prediction depth embedding approach, which adds a learnable embedding to differentiate prediction depth: $h_{\text{MTP}}^{(g)} = h_{\text{shared}} + e_{\text{depth}}^{(g)}$, where $g \in \{1, \ldots, K\}$ denotes the prediction depth (1-step ahead, 2-steps ahead, etc.). The prediction depth is a deterministic function of sequence position: in the MTP training structure, each position $p$ maps to exactly one depth $g(p)$. By Corollary~\ref{cor:absolute_position}, absolute position is already recoverable from RoPE attention scores. Since $g(p)$ is computable from $p$, the explicit embedding $e_{\text{depth}}^{(g)}$ provides no information beyond what is already encoded in RoPE attention patterns.

For approaches that inject projected NTP context into hidden states, analogous redundancy arises: MTP positions already attend to NTP positions and receive their information through the attention output, making the additional projection pathway redundant. The general principle is that in RoPE-based transformers, explicit injection of information already accessible through attention is representationally unnecessary and harmful to optimization.

\subsection{Hidden State Ablation Details}
\label{appendix:hidden_ablation_details}

We evaluated five hidden state strategies for MTP positions. In PARD~\citep{an2025pard}, prediction depth is referred to as ``group index'' where group $g \in \{1, \ldots, K\}$ predicts the $(g+1)$-th future token. We use the more descriptive term ``prediction depth'' throughout.

\begin{itemize}
    \item \textbf{Baseline (learnable shared)}: A single learnable vector $h_{\text{shared}}$ shared across all MTP positions. This is our recommended approach.

    \item \textbf{+ depth-specific encoding}: Add a learnable embedding to differentiate prediction depth: $h_{\text{MTP}}^{(g)} = h_{\text{shared}} + e_{\text{depth}}^{(g)}$, where $g \in \{1, \ldots, K\}$ indicates predicting 1-step ahead, 2-steps ahead, etc. This is distinct from sequence position IDs (RoPE) used in attention.

    \item \textbf{+ NTP hidden + depth encoding}: Project the last NTP hidden state and combine with prediction depth embedding: $h_{\text{MTP}}^{(g)} = h_{\text{shared}} + \text{proj}(h_{\text{ntp}}) + e_{\text{depth}}^{(g)}$, where $\text{proj}(\cdot)$ is a linear layer. This provides both explicit context from the NTP position and prediction depth information.

    \item \textbf{+ NTP hidden only}: Project the last NTP hidden state without depth embedding: $h_{\text{MTP}} = h_{\text{shared}} + \text{proj}(h_{\text{ntp}})$, injecting context while relying on attention for depth differentiation.

    \item \textbf{+ regularized NTP hidden}: Add projected NTP hidden state with dropout regularization and learnable scaling: $h_{\text{MTP}} = h_{\text{shared}} + \alpha \cdot \text{dropout}(\text{proj}(h_{\text{ntp}}))$. The scalar $\alpha$ is initialized to 0.1, allowing the model to start near baseline behavior ($\alpha \approx 0$ ignores context) and learn whether to increase $\alpha$ if context helps. Dropout (rate 0.1) prevents overfitting to specific context patterns.
\end{itemize}

All alternatives underperformed the simple learnable shared hidden state (baseline) by 7--15\% (see Table~\ref{tab:hidden_strategies} in main paper). The theoretical analysis above explains this result: the augmentation provides redundant information that interferes with optimization.

  \begin{figure}[h]
  \centering
  \includegraphics[width=0.9\textwidth]{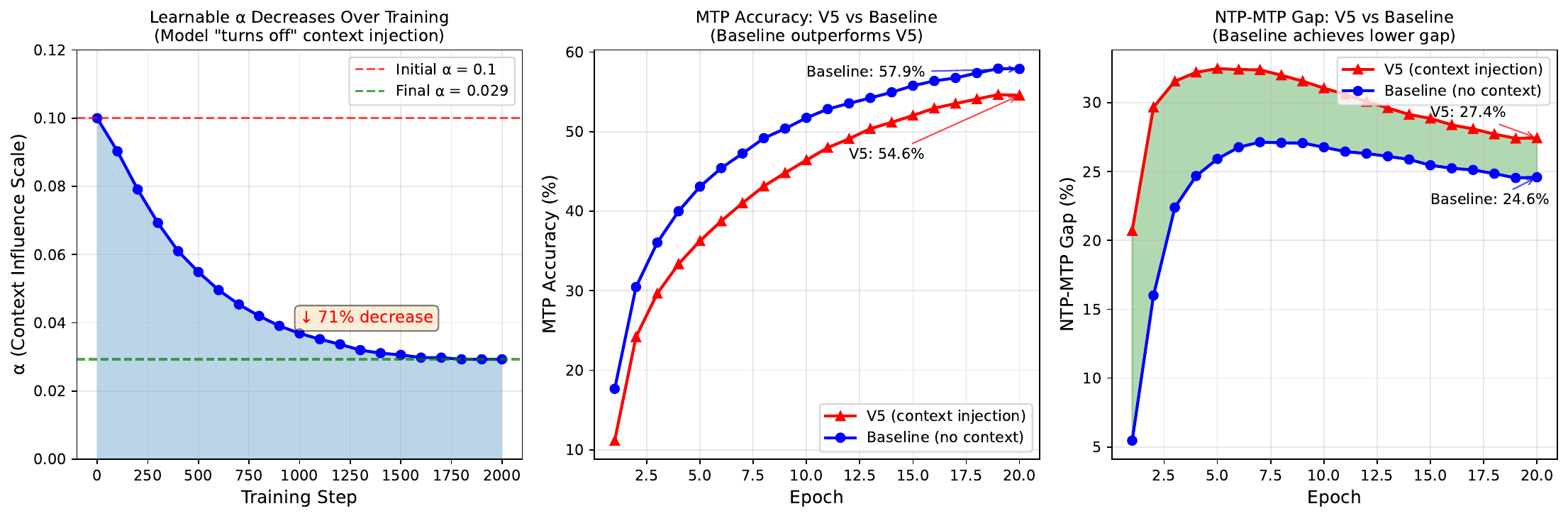}
  \caption{Learnable $\alpha$ trajectory and comparison with baseline. \textbf{Left:} $\alpha$ decreases 71\% from initialization, converging to $\sim$0.03. \textbf{Center:} MTP accuracy comparison---baseline (no context injection) achieves 57.9\% versus the regularized variant's 54.6\%. \textbf{Right:} NTP-MTP gap comparison---baseline achieves a lower gap (24.6\%) than the regularized variant (27.4\%). The model actively learns to minimize context injection because it hurts performance.}
  \label{fig:alpha_trajectory}
  \end{figure}

\section{2-Layer vs 4-Layer P-EAGLE Comparison}
\label{appendix:2layer}

Table~\ref{tab:2layer_comparison} compares acceptance length between 2-layer and 4-layer P-EAGLE configurations. The 2-layer variant was trained for fewer epochs and serves as a capacity-latency tradeoff point: while it offers lower per-forward-pass latency, 4-layer P-EAGLE consistently achieves higher acceptance lengths across all benchmarks.

\begin{table}[h]
\centering
\caption{Acceptance length comparison between 2-layer and 4-layer P-EAGLE. Speculation depth $K=5$, max new tokens 2048. Percentages denote change relative to AR EAGLE-3.}
\label{tab:2layer_comparison}
\resizebox{\columnwidth}{!}{
\begin{tabular}{llccc}
\toprule
\textbf{Model} & \textbf{Dataset} & \textbf{AR EAGLE-3} & \textbf{P-EAGLE (2L)} & \textbf{P-EAGLE (4L)} \\
\midrule
\multirow{4}{*}{GPT-OSS 120B}
 & HumanEval & 3.5 & 3.2 ($-$6.4\%) & \textbf{3.5} (0.0\%) \\
 & MT-Bench  & 2.7 & 2.7 (+0.8\%) & \textbf{2.9} (+10.2\%) \\
 & GSM-8K    & 3.3 & 3.2 ($-$2.7\%) & \textbf{3.5} (+5.2\%) \\
 & \textit{Average} & \textit{3.1} & \textit{3.0 ($-$3.2\%)} & \textit{\textbf{3.3} (+4.5\%)} \\
\midrule
\multirow{4}{*}{GPT-OSS 20B}
 & HumanEval & 3.7 & 3.3 ($-$11.1\%) & \textbf{3.8} (+2.4\%) \\
 & MT-Bench  & 3.4 & 2.9 ($-$13.8\%) & \textbf{3.4} (+1.5\%) \\
 & GSM-8K    & 3.9 & 3.5 ($-$10.3\%) & \textbf{4.0} (+3.1\%) \\
 & \textit{Average} & \textit{3.7} & \textit{3.2 ($-$12.4\%)} & \textit{\textbf{3.7} (+2.5\%)} \\
\midrule
\multirow{4}{*}{Qwen3-Coder 30B}
 & HumanEval & 4.4 & 4.2 ($-$5.0\%) & \textbf{4.5} (+3.7\%) \\
 & MT-Bench  & 3.0 & 2.8 ($-$6.3\%) & \textbf{3.0} (+0.3\%) \\
 & GSM-8K    & 3.1 & 2.8 ($-$10\%) & \textbf{3.2} (+1.0\%) \\
 & \textit{Average} & \textit{3.5} & \textit{3.3 ($-$6.9\%)} & \textit{\textbf{3.6} (+2.0\%)} \\
\bottomrule
\end{tabular}
}
\end{table}

Key observations: (1) 2-layer P-EAGLE achieves 93--97\% of the baseline acceptance length on average, while 4-layer matches or exceeds it. (2) The gap is most pronounced on GPT-OSS 20B, where 2-layer shows $-$12.4\% average degradation while 4-layer achieves +2.5\% improvement. (3) For deployment scenarios prioritizing lower drafting latency over acceptance length, 2-layer P-EAGLE remains a viable option.




\end{document}